\documentclass{article}





\usepackage[nonatbib, final]{neurips_2025}

\usepackage[utf8]{inputenc} 
\usepackage[T1]{fontenc}    
\usepackage{hyperref}       
\usepackage{url}            
\usepackage{booktabs}       
\usepackage{amsfonts}       
\usepackage{nicefrac}       
\usepackage{microtype}      
\usepackage{xcolor}         

\usepackage[sort]{natbib}
\usepackage{titletoc}
\usepackage{enumerate}
\usepackage{amsmath}
\usepackage{amssymb}
\usepackage{mathtools}
\usepackage{amsthm}

\usepackage[capitalize,noabbrev]{cleveref}

\usepackage{tikz-cd}
\usepackage{subfig}

\theoremstyle{plain}
\newtheorem{theorem}{Theorem}[section]
\newtheorem{proposition}[theorem]{Proposition}
\newtheorem{lemma}[theorem]{Lemma}

\theoremstyle{definition}
\newtheorem{definition}[theorem]{Definition}

\theoremstyle{remark}
\newtheorem{remark}[theorem]{Remark}

\theoremstyle{plain}

\newenvironment{manualtheorem}[1]{%
  \manualtheoreminner
}{\endmanualtheoreminner}

\newenvironment{manuallemma}[1]{%
  \manuallemmainner
}{\endmanuallemmainner}

\newcommand{\GL}{\mathrm{GL}}
\newcommand{\R}{\mathbb{R}}
\newcommand{\Sym}{\mathrm{Sym}}
\newcommand{\din}{d_{\mathrm{in}}}
\newcommand{\dout}{d_{\mathrm{out}}}
\newcommand{\Orb}{\mathrm{Orb}}
\newcommand{\Stab}{\mathrm{Stab}}
\newcommand{\Aut}{\mathrm{Aut}}
\newcommand{\Cl}{\mathrm{Cl}}

\title{A Tale of Two Symmetries: Exploring the Loss Landscape of Equivariant Models}

%

\author{%
  YuQing Xie\\
  Massachusetts Institute of Technology\\
  Cambridge, MA 02139 \\
  \texttt{xyuqing@mit.edu}\\
  \And
  Tess Smidt \\
  Massachusetts Institute of Technology\\
  Cambridge, MA 02139 \\
  \texttt{tsmidt@mit.edu} \\
}

\begin{document}

\maketitle

\begin{abstract}
    Equivariant neural networks have proven to be effective for tasks with known underlying symmetries. However, optimizing equivariant networks can be tricky and best training practices are less established than for standard networks. In particular, recent works have found small training benefits from relaxing equivariance constraints. This raises the question: do equivariance constraints introduce fundamental obstacles to optimization? Or do they simply require different hyperparameter tuning? In this work, we investigate this question through a theoretical analysis of the loss landscape geometry. We focus on networks built using permutation representations, which we can view as a subset of unconstrained MLPs. Importantly, we show that the parameter symmetries of the unconstrained model has nontrivial effects on the loss landscape of the equivariant subspace and under certain conditions can provably prevent learning of the global minima. Further, we empirically demonstrate in such cases, relaxing to an unconstrained MLP can sometimes solve the issue. Interestingly, the weights eventually found via relaxation corresponds to a different choice of group representation in the hidden layer. From this, we draw 3 key takeaways. (1) By viewing the unconstrained version of an architecture, we can uncover hidden parameter symmetries which were broken by choice of constraint enforcement (2) Hidden symmetries give important insights on loss landscapes and can induce critical points and even minima (3) Hidden symmetry induced minima can sometimes be escaped by constraint relaxation and we observe the network jumps to a different choice of constraint enforcement. Effective equivariance relaxation may require rethinking the fixed choice of group representation in the hidden layers.
\end{abstract}

\section{Introduction}





\textit{It was the best of models, it was the worst of models, it was the age of augmentation, it was the age of equivariance...}

Many domains possess latent symmetries, inspiring the development of equivariant networks which by design already account for these symmetries \citep{cohen2016group,weiler20183dsteerablecnnslearning,thomas2018tensorfieldnetworksrotation, zaheer2017deep, cohen2018spherical}. Such networks have proven to be extremely effective in a wide variety of tasks including molecular force fields \citep{mace,allegro,nequip}, catalyst discovery \citep{equiformer}, charge density prediction \citep{fu2024recipe}, generative models \citep{edm}, protein structure prediction \citep{alphafold,equifold}, and particle physics \citep{bogatskiy2020lorentz,boyda2021sampling}. One of the major benefits of equivariance is improved sample complexity, and better generalizability, both of which have concrete theoretical support \citep{lyle2020benefits, elesedy2021provably, behboodi2022pac}.

However, it also has been observed that equivariant networks can be tricky to optimize and best training practices for equivariant networks are less established \citep{kondor2018clebsch,equiformer,wangdiscovering}. In addition, there have been a number of works exploring the relaxation of equivariance constraints \citep{wang2022approximately, wang2023general, petrache2023approximation, van2022relaxing}. 
The motivation for these works is model misspecification. Real world data may not have the perfect symmetry assumed in equivariant models, so it could even be harmful to have perfect equivariance. Indeed, relaxed models do appear to have a small performance benefit over perfectly equivariant models \citep{wang2022approximately, van2022relaxing} and can have better theoretical performance on such data \citep{petrache2023approximation,wang2023general}.

Combining these observations, a natural question arises, can relaxing equivariance improve training even when we expect a perfectly equivariant final solution? This question is addressed in \citep{pertigkiozoglou2024improving}. In this work, an additional unconstrained linear contribution was added to the equivariant layers and regularization terms were added to the loss to still encourage equivariance. At the end of training, this additional contribution is removed, projecting the model back to an equivariant subspace. Empirically, they demonstrate this procedure improves training performance on a variety of equivariant models. While compelling, the improvements are often small and there is little theoretical insight on how specifically the relaxation helps.

In our work, we seek to address an even more foundational question: \textbf{Do equivariance constraints themselves create fundamental obstacles to optimization?} In this paper, we show that our choice of constraint enforcement can hide parameter symmetries of the larger unconstrained model class and that these hidden symmetries can induce critical points and even spurious minima.

We structure our paper as follows. In Section~\ref{sec:background}, we introduce the key concepts related to equivariance and parameter symmetry needed for our work. In Section~\ref{sec:main_theorem} we apply these concepts to analyze the loss landscape. In particular, we prove that parameter symmetries of models in an unconstrained function space can create critical points and even spurious minima for the constrained model. We then apply these findings to equivariant networks and characterize their hidden fixed point subspaces. In Section~\ref{sec:experiments}, we empirically explore loss landscapes in a teacher-student setup. We validate our theoretical findings of spurious local minima and empirically demonstrate the benefits of relaxation. In particular, the relaxed model chooses a different group representation for its hidden layer.

\begin{figure}[!h]
    \centering
    \subfloat[]{\includegraphics[width=0.4\textwidth]{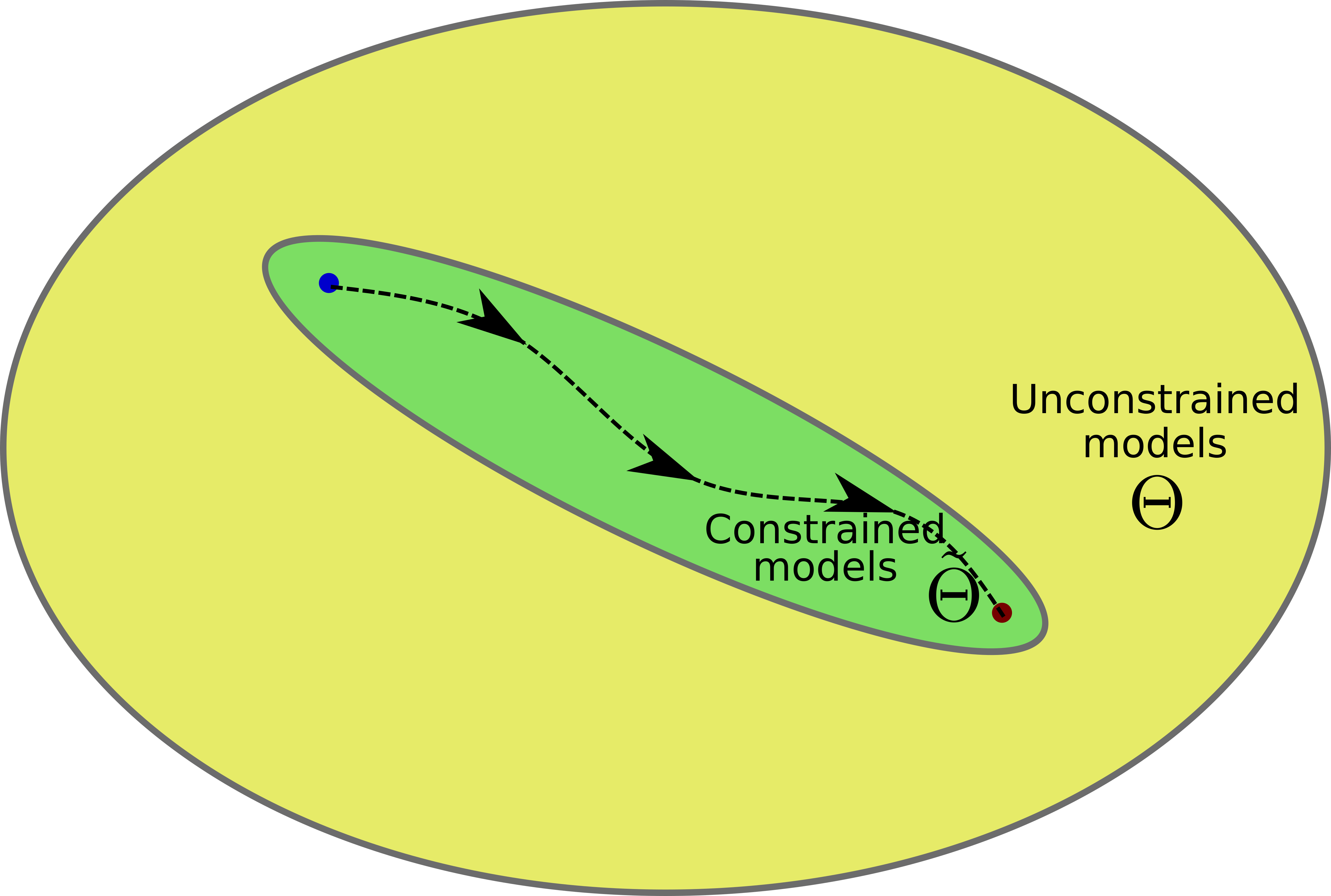}}\hspace{40pt}
    \subfloat[]{\includegraphics[width=0.4\textwidth]{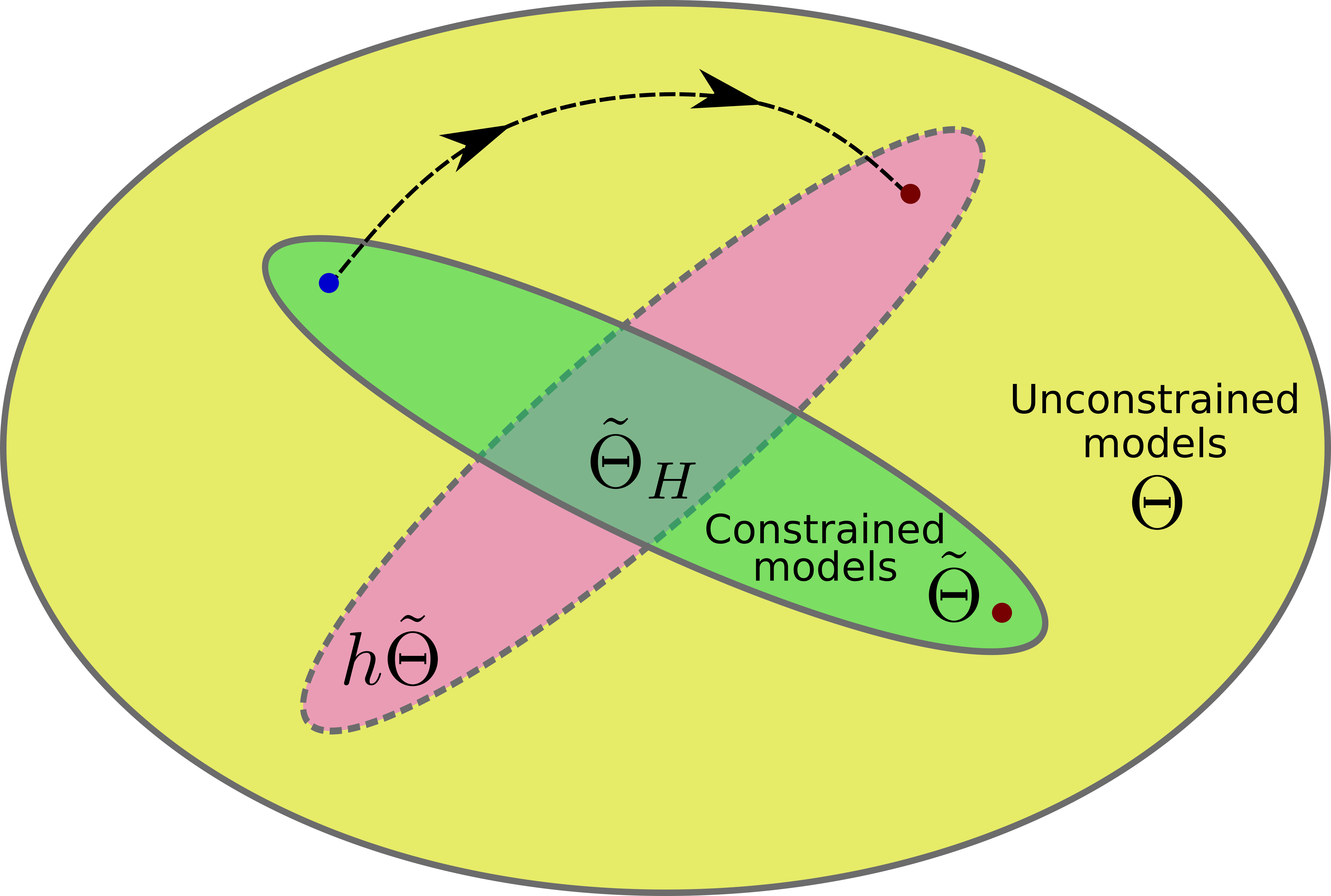}}
    \caption{(a) Standard viewpoint of why constraints may help. If we know the ground truth function satisfies some type of constraint, then enforcing such constraints may reduce the search space allowing for better training. (b) In reality, unconstrained networks contain many parameter symmetries. However, enforcing constraints (such as equivariance) require choices that breaks some of these symmetries. These broken symmetries are "hidden" to the constrained model but still influence gradient values and in some cases create spurious minima and hinder learning. Further, we empirically observe that when completely relaxing equivariance constraints, a network can sometimes "jump" between subspaces corresponding to different ways of enforcing the same type of constraint.}
    \label{fig:viewpoints}
\end{figure}

As a result, we find three key takeaways from our work:
\begin{itemize}
    \item By viewing a given architecture as a constrained version of a larger function class, we can uncover previously hidden parameter symmetries that were broken by our choice of constraint enforcement.
    \item Hidden symmetries have important consequences on loss landscape and can induce critical points and even minima. 
    \item Empirically, we find a spurious minimum in an equivariantly constrained subspace can escape to a better minimum in a symmetrically related subspace through constraint relaxation. To achieve more effective relaxation of equivariant models, one may have to rethink our fixed choice of group representation in the hidden layers
\end{itemize}



\section{Background}
\label{sec:background}
Here, we introduce the key concepts necessary for our work. A more in depth overview can be found in Appendix~\ref{sec:detailed_back} including an overview of necessary group theory.
\subsection{Equivariant networks}

We first formally define equivariance.
\begin{definition}[Equivariance]
    Let $G$ be a group with some group action on spaces $X,Y$. A function $f:X\to Y$ is $G$-equivariant if for all $g\in G,\ x\in X,\ y\in Y$ we have $f(gx)=gf(x)$.
\end{definition}
Essentially transforming the input is the same as transforming the output of an equivariant function. Importantly, the composition of $G$-equivariant functions is still $G$-equivariant. Hence in practice one often implements equivariant neural networks by designing equivariant layers and composing them. 

\subsubsection{Group representations and irreps}
One usually works with vector spaces and linear group actions. Such an action is referred to as a group representation. 
\begin{definition}[Group representation]
    Let $G$ be a group and $V$ a vector space. A group representation is a homomorphism $\rho:G\to\GL(V)$.
\end{definition}

In many cases, we can explicitly decompose representations into a direct sum of representations on smaller subspaces. Representations which cannot be decomposed further are irreps.
\begin{definition}[Irreducible representation]
    Let $G$ be a group and $V$ a vector space. A group representation $\rho:G\to\GL(V)$ is irreducible if there is no subspace $W\subset V$ other than $0$ such that $gW=W$ for all $g\in G$.
    \label{def:irrep}
\end{definition}
The irreps are well understood for many groups. In particular, Schur's lemma gives us an easy way to parameterize maximally expressive equivariant linear layers once we have an irrep decomposition.

\subsubsection{Nonlinearities and permutation representations}


There have been a number of studies analyzing the construction of equivariant networks \citep{cohen2016group,weiler20183dsteerablecnnslearning, finzi2021practical, geiger2022e3nn, cohen2019general, thomas2018tensorfieldnetworksrotation}. Our work focuses on networks which use permutation representations. These include group convolutions \citep{cohen2016group}, Deep Sets \citep{zaheer2017deep}, and spherical signals \citep{cohen2018spherical}. The reason for working with permutation representations is that they are the only type of representation compatible with arbitrary pointwise nonlinearities \citep{pacinicharacterization}. Such networks can be viewed as a subspace of unconstrained MLPs, which are relatively well studied.


\begin{definition}[Permutation representation]
    Let $G$ be a group. Any group homomorphism $\pi:G\to\Sym(n)$ is an abstract permutation representation. Let $\rho:\Sym(n)\to\GL(V_n)$ map corresponding permutation matrices acting on $n$-dimensional vector space $V_n$. Then $\rho\circ\pi:G\to\GL(V_n)$ is a permutation representation. We will abbreviate permutation representation as p-rep and $\rho\circ\pi$ to $\rho$.
    \label{def:perm_rep}
\end{definition}

The intuition is that for a p-rep, there is always some choice of basis $b_i$ such that $\rho(g)b_i=b_{\pi(i)}$. In other words, group actions just permute the basis. The trivial representation is an example since there is only one basis and it remains invariant under group action. The regular representation commonly used in group convolutions is another example since we can associate each basis vector to a group element \cite{cohen2016group}. In Appendix~\ref{sec:infinite_p_reps}, we discuss how to generalize p-reps to infinite dimensions.

Just like how irreps are building blocks of arbitrary representations, we are interested in the building blocks of p-reps. Such p-reps are the transitive p-reps.
\begin{definition}[Transitive permutation representation]
    Let $G$ be a group and $\rho:G\to\GL(V_n)$ be a permutation representation. We say the permutation representation is transitive if there is a basis $b_1,\ldots,b_n$ of $V_n$ such that the action of $G$ on the set $\{b_1,\ldots,b_n\}$ is transitive.
\end{definition}

It turns out that there is a nice way to characterize all possible transitive p-reps.
\begin{lemma}
    Let $G$ be a group. There is a canonical bijection between the transitive permutation representations of $G$ and the classes of conjugate subgroups.
    \label{lemma:canonical_bijection}
\end{lemma}

Another commonly used nonlinearity is a tensor product. Given any pair of representations, we can always form a new representation by taking the tensor product of the vector spaces. The mapping from the pair of representations into the tensor product space is a bilinearity. Tensor products are often used in irrep based equivariant frameworks such as e3nn \citep{geiger2022e3nn}. It turns out for many groups, there is a correspondence between pointwise nonlinearities and tensor products so our results may be relevant in those architectures as well.

\subsection{Parameter symmetries and loss landscapes}
Loss landscapes have been studied in numerous works and provides important insight on training and generalization performance \citep{frankle2020linear, simsek2021geometry, cooper2018loss, garipov2018loss, dauphin2014identifying, li2018visualizing, entezari2021role, brea2019weight, ainsworth2022git}. Of particular interest for us is the impact of parameter symmetries on loss landscape geometry. In \citep{brea2019weight}, it was shown that permutation symmetries can induce critical points creating saddles in the loss landscape. This was further explored in \citep{simsek2021geometry} where permutation symmetries induce entire affine subspaces of global minima subspaces (which they call an expansion manifold) and symmetry induced critical subspaces. These ideas are complementary to our work.

First, we define what we mean by parameter symmetry. In particular, we highlight that we should distinguish these into local and global parameter symmetries.
\begin{definition}[Parameter symmetries]
    Let $\mathcal{F}_\Theta=\{f_\theta:X\to Y:\theta\in\Theta\}$ be a class of functions parameterized by $\theta\in\Theta$ where $\Theta$ is the vector space of our parameters. Any homeomorphism $p:\Theta\to\Theta$ is a local parameter symmetry at $\theta$ if $f_{\theta}=f_{p\theta}$ and we denote the group of such transformations as $P_\theta$. If $p$ is a local parameter symmetry for all $\theta\in\Theta$, then we say $p$ is a global parameter symmetry and we denote the group of such transformations as $P$.
\end{definition}
For example, permuting neurons in a MLP gives a global parameter symmetry as this does not change the output of the network. Permuting the output weights of two neurons is a local parameter symmetry if those two neurons share the same input weights. However, if the input weights are different, then permuting the output weights generically gives different functions so this is not a global parameter symmetry. We will focus on the effects of global parameter symmetries in this paper.

Next, given any pair of outputs, we choose a loss function $\ell:Y\times Y\to\mathbb{R}$ to measure the discrepancy. Given a distribution $D_{X,Y}$ of input-output pairs, we can use the loss function to define a loss landscape as a function $L:\Theta\to\mathbb{R}$ given by $L(\theta)=\mathbb{E}_{x,y\sim D_{X,Y}}[\ell(f_\theta(x),y)]$. Note that by construction, the loss landscape is invariant under global parameter symmetries and has the local parameter symmetries $P_\theta$ at any given $\theta$.

Of particular interest for us are spaces which are highly symmetric.
\begin{definition}[Fixed point subspace]
    Let $G$ be a group with a linear action on some vector space $V$. Let $H\leq G$ be a subgroup. Then the fixed point subspace is
    \[V_H=\{v\in V:hv=v\ \forall h\in H\}.\]
    \label{def:fixed_point_subspace}
\end{definition}
For us, we usually consider $\Theta$ as our vector space and subgroups of the global parameter symmetries $H\leq P$ with a linear action. Most commonly considered parameter symmetries act linearly on parameter space.

Fixed point subspaces are useful because of the principle of symmetric criticality which has widespread usage in physics \citep{palais1979principle}.
\begin{proposition}[Principle of symmetric criticality]
    Let $H$ be a group which acts on a manifold $M$ through diffeomorphisms. Define
    \[\Sigma=\{p\in M:gp=p\ \forall g\in H\}.\]
    Let $f:M\to\mathbb{R}$ be a smooth function invariant under $H$. If $\Sigma$ is a submanifold, then critical points of $f|_\Sigma$ are critical points of $f$.
    \label{prop:symmetric_criticality}
\end{proposition}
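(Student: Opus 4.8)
This is the classical \emph{principle of symmetric criticality} of Palais, and the plan is to prove it by a purely pointwise argument at a critical point of the restricted function. Fix $p\in\Sigma$ with $d(f|_\Sigma)_p=0$. Since $\Sigma$ is assumed to be a submanifold, this just says that $df_p$ vanishes on the subspace $T_p\Sigma\subseteq T_pM$, and the goal is to upgrade this to $df_p=0$ on \emph{all} of $T_pM$, which (by smoothness of $f$) is exactly the statement that $p$ is a critical point of $f$.

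First I would record the elementary consequences of $H$-invariance at the fixed point $p$. Because $gp=p$ for every $g\in H$, the differential $(dg)_p$ is a linear automorphism of $T_pM$, and $g\mapsto(dg)_p$ is a linear representation of $H$ on $T_pM$. Differentiating the identity $f\circ g=f$ at $p$ gives $df_p\circ(dg)_p=df_p$ for all $g\in H$; that is, $df_p$ is a fixed vector of the dual representation on $T_p^{*}M$.

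The core of the argument is then two facts about the fixed subspace $(T_pM)^H=\{v\in T_pM:(dg)_pv=v\ \forall g\in H\}$. First, \emph{averaging}: for $H$ compact (in particular for the finite groups relevant to our application) the operator $v\mapsto\bar v:=\int_H(dg)_pv\,dg$ is a projection of $T_pM$ onto $(T_pM)^H$, and by the $H$-invariance of $df_p$ one has $df_p(\bar v)=df_p(v)$ for every $v$. Second, \emph{identification of the fixed subspace with} $T_p\Sigma$: one always has $T_p\Sigma\subseteq(T_pM)^H$, since a curve in $\Sigma$ through $p$ is pointwise fixed by $H$ so its velocity is fixed by each $(dg)_p$; for the reverse inclusion I would equip $M$ with an $H$-invariant Riemannian metric (again by averaging, using compactness), so that the exponential map satisfies $g\circ\exp_p=\exp_p\circ(dg)_p$ near $0$. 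This equivariance forces $\exp_p$ to carry a neighborhood of $0$ in $(T_pM)^H$ into $\Sigma$, whence $\dim\Sigma\ge\dim(T_pM)^H$ and therefore $T_p\Sigma=(T_pM)^H$.

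Combining the pieces finishes the proof: given any $v\in T_pM$, its average $\bar v$ lies in $(T_pM)^H=T_p\Sigma$, so $df_p(\bar v)=d(f|_\Sigma)_p(\bar v)=0$, and by the first fact $df_p(v)=df_p(\bar v)=0$. Hence $df_p\equiv 0$ and $p$ is a critical point of $f$. I expect the main obstacle to be the second fact, pinning down $T_p\Sigma=(T_pM)^H$, which is where the submanifold hypothesis is genuinely used and where, for noncompact $H$, extra care (or extra hypotheses) would be required; since in this paper $H$ is always a finite subgroup of the parameter symmetry group, the averaging steps are immediate and the result applies directly.
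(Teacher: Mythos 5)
The paper does not actually prove this proposition; it is quoted as background and attributed to Palais, so there is no in-paper argument to compare against. Your proof is the standard one for compact (in particular finite) group actions and is correct as written: differentiating $f\circ g=f$ at a fixed point $p$ shows $df_p$ is invariant under the dual representation, averaging gives $df_p(v)=df_p(\bar v)$ with $\bar v\in(T_pM)^H$, and the equivariance of $\exp_p$ for an $H$-invariant metric identifies $(T_pM)^H$ with $T_p\Sigma$, whence $df_p(v)=d(f|_\Sigma)_p(\bar v)=0$. Two small remarks. First, for the conclusion you only need the inclusion $(T_pM)^H\subseteq T_p\Sigma$, which already follows from the curves $t\mapsto\exp_p(tv)$, $v\in(T_pM)^H$, lying in $\Sigma$; the reverse inclusion and the dimension count are a harmless detour. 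Second, you are right to flag compactness as the genuine hypothesis: as literally stated, with an arbitrary group acting by diffeomorphisms, the proposition is false in general (Palais's paper is precisely about identifying sufficient conditions, such as compactness or isometric actions, and exhibits counterexamples otherwise), so your proof establishes the version of the statement that the paper actually uses. Since every $H$ invoked in the paper is a finite subgroup of the parameter symmetry group acting unitarily, the averaging steps are immediate and your argument covers all cases needed.
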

We see that the loss landscape is invariant under $P$ and in particular, invariant under any subgroup of $P$. For example, if we consider the subgroup $H$ corresponding to permuting two specific neurons, the corresponding fixed point subspace $\Theta_H$ corresponds to shared input and output weights for those two neurons. This is effectively a network with one fewer neuron. The principle of symmetric criticality then tells us minima of $L|_{\Theta_{H}}$ is a critical point of $L$, reminiscent of previously known results relating minima of smaller networks to critical points of larger ones \citep{fukumizu2000local,simsek2021geometry,brea2019weight}. To connect directly back to these prior works, we note that any transformation of the output weights of these two neurons which leaves the sum unchanged is a local symmetry of $\Theta_H$. Hence taking the orbit of $\Theta_H$ gives a larger subspace corresponding to the different ways we may embed the smaller network in a larger one.

\begin{figure}[!h]
    \centering
    \subfloat[Permuting neurons]{\hspace{30pt}\includegraphics[width=0.08\textwidth]{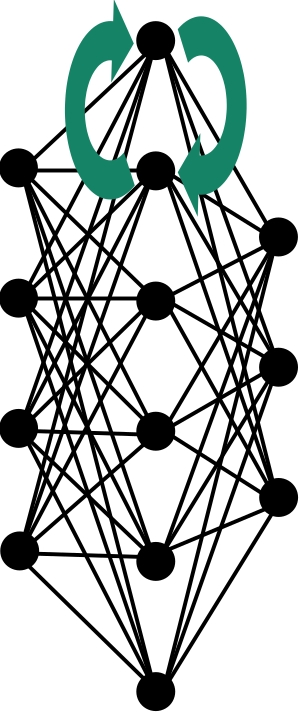}\hspace{30pt}}
    \subfloat[Fixed point subspace]{\hspace{30pt}\includegraphics[width=0.08\textwidth]{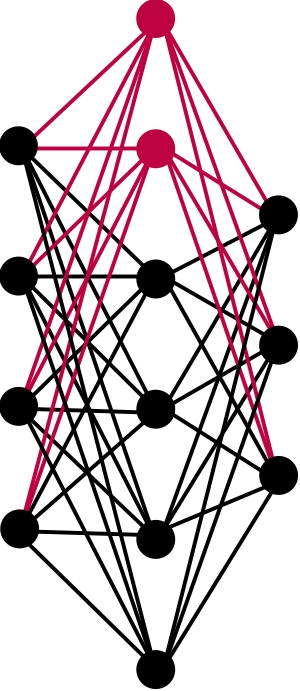}\hspace{30pt}}
    \subfloat[Reducible neuron]{\hspace{30pt}\includegraphics[width=0.08\textwidth]{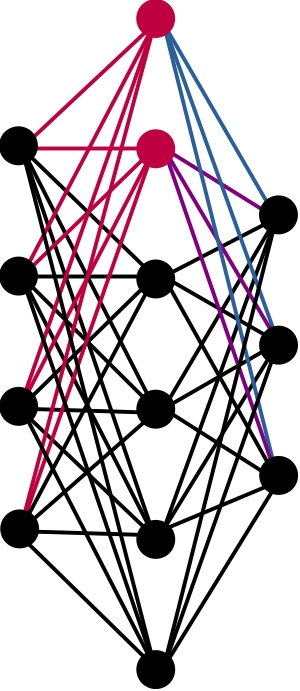}\hspace{30pt}}
    \subfloat[Smaller network]{\hspace{30pt}\includegraphics[width=0.08\textwidth]{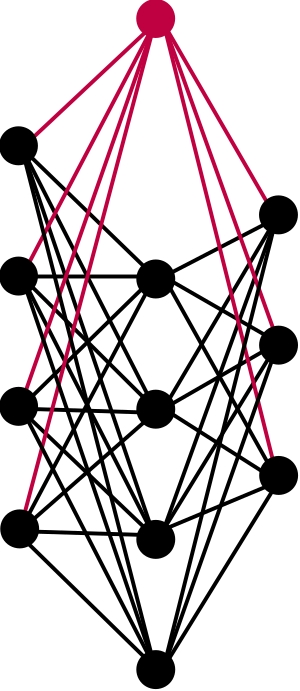}\hspace{30pt}}
    \caption{(a) Subgroup of global parameter symmetry which permutes two neurons. (b) Corresponding fixed point subspace where both input and output weights of these two neurons are the same. (c) There is an additional local parameter symmetry on the output weights. In particular, as long as the sum of output weights of the neurons is the same, we compute the same function. (d) Smaller network with one fewer neuron computing the same function.}
    \label{fig:fixed_point}
\end{figure}

It will be useful to also define the "generic" regions of parameter space where we cannot reduce our network into a smaller one. If all ingoing weights of two nodes are the same, then we can merge these two nodes and create a single node with the shared ingoing weights and outgoing weights which is a sum of the outgoing weights of the original two nodes. Further, any node which has all 0 outgoing weights can be eliminated from the network. We if we cannot perform such reductions, then we say the weights are irreducible \cite{simsek2021geometry}.
\begin{definition}[Irreducible neurons]
    We call a block of $m$-neurons in the same layer irreducible if all $m$ ingoing weights are distinct and nonzero, and the outgoing weight vectors are nonzero.
\end{definition}

\section{Loss landscape of equivariant networks}
\label{sec:main_theorem}

Here we present the main theoretical results and intuition for why they are true. Proofs can be found in Appendix~\ref{sec:proofs}. We first make some general statements that apply to general linearly constrained models, not just equivariant ones.

\subsection{Hidden symmetries in constrained models}
The key insight is that global parameter symmetries can disappear once we restrict to constrained subspace. However, these symmetries still impact the loss landscape of the constrained model.

If the unconstrained parameter space is $\Theta$, let us denote the linearly constrained subspace by $\tilde{\Theta}$. Then of course any $p$ acting on any $\tilde{\theta}\in\tilde{\Theta}$ leaves the resulting function unchanged. However, the problem is that we may have $p\tilde\theta\notin\tilde\Theta$. Hence, such symmetries may be overlooked when analyzing the loss landscape. We call such symmetries "hidden" symmetries.


\begin{definition}[Hidden symmetries]
    Let $\mathcal{F}_\Theta$ be a class of functions parameterized by $\Theta$. Let $P$ be the group of global parameter symmetries. Let $\tilde{\Theta}\subset\Theta$ be a subspace of parameters satisfying some constraint and the corresponding class of functions $\mathcal{F}_{\tilde{\Theta}}$. Let $\mathcal{F}_{\tilde{\Theta}}$ have global parameter symmetries $\tilde{P}$. We define a hidden symmetry as any group $H$ where $H\leq P$ and $H\nleq \tilde{P}$.
    \label{def:hidden_symm}
\end{definition}

However these hidden symmetries still have consequences on the loss landscape of the constrained model. Intuitively, we may try imagining the gradient flow. In particular, assuming nicely chosen parameterization \citep{kristiadi2023geometry}, once we reach a fixed point subspace we cannot leave it. For instance, once two neurons share the same weights, they would both receive the same gradient updates and would always share the same weights. But the constrained parameter space $\tilde{\Theta}$ can intersect a fixed point subspace corresponding to a hidden symmetry giving what we call a \textbf{hidden fixed point space} $\tilde\Theta_H$. But we can never leave $\tilde\Theta_H$ through gradient descent. In particular, critical points of the loss landscape $L|_{\tilde\theta_H}$ would be critical points of $L|_{\tilde\Theta}$.

\begin{lemma}[Hidden symmetry induced critical points]
    Let $\mathcal{F}_{\tilde{\Theta}}\subset\mathcal{F}_{\Theta}$ be a class of constrained networks. Let $L:\Theta\to\mathbb{R}$ be a loss associated to these networks. Let $H$ be a hidden symmetry with unitary action and let $\Theta_H=\{\theta\in\Theta:h\theta=\theta\ \forall h\in H\}$ and $\tilde{\Theta}_H=\Theta_H\cap\tilde{\Theta}$.

    If $(\Theta_H^\perp\cap\tilde{\Theta})+(\Theta_H\cap\tilde{\Theta})=\tilde{\Theta}$, then critical points of $L|_{\tilde{\Theta}_H}$ are also critical points of $L|_{\tilde{\Theta}}$.
    \label{lemma:hidden_symm_crit_pts}
\end{lemma}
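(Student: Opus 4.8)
The plan is to reduce the statement to the principle of symmetric criticality (Proposition~\ref{prop:symmetric_criticality}) applied not to $L$ on $\Theta$, but to the restricted loss $L|_{\tilde\Theta}$ on the constrained space $\tilde\Theta$, with the group $H$ acting on $\tilde\Theta$. The subtlety is that $H$ is a \emph{hidden} symmetry, so $H$ need not preserve $\tilde\Theta$ in general; the hypothesis $(\Theta_H^\perp\cap\tilde\Theta)+(\Theta_H\cap\tilde\Theta)=\tilde\Theta$ is exactly what we need to fix this. So the first step is to observe that because the $H$-action is unitary, $\Theta$ splits orthogonally as $\Theta_H\oplus\Theta_H^\perp$, and $H$ acts trivially on $\Theta_H$ and invariantly on $\Theta_H^\perp$.

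Second, I would show that under the stated hypothesis $H$ actually does act on $\tilde\Theta$. The hypothesis says every $\tilde\theta\in\tilde\Theta$ decomposes as $\tilde\theta = u + v$ with $u\in\Theta_H^\perp\cap\tilde\Theta$ and $v\in\Theta_H\cap\tilde\Theta$. Then for $h\in H$, $h\tilde\theta = hu + hv = hu + v$. Since $hu\in\Theta_H^\perp$ (as $H$ preserves $\Theta_H^\perp$) and I want $hu\in\tilde\Theta$ — here is where I need a little care: $u\in\tilde\Theta$ but I need $hu\in\tilde\Theta$ as well. Actually the cleanest route is to note that $\Theta_H^\perp\cap\tilde\Theta$ is an $H$-invariant subspace: if $u\in\Theta_H^\perp\cap\tilde\Theta$ then $hu\in\Theta_H^\perp$ automatically, and... hmm, this still requires $hu\in\tilde\Theta$. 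I think the right reading is that the hypothesis, combined with $\dim$ counting, forces $\Theta_H^\perp\cap\tilde\Theta$ to be $H$-invariant, OR one should simply take the hypothesis to implicitly assert that the decomposition is a decomposition into $H$-invariant subspaces — I would state explicitly in the proof that $\tilde\Theta = (\Theta_H^\perp\cap\tilde\Theta)\oplus(\Theta_H\cap\tilde\Theta)$ as $H$-modules, with the second summand being the fixed subspace $(\tilde\Theta)_H$. In any case, granting $H$-invariance of $\tilde\Theta$, the fixed point set of $H$ acting on $\tilde\Theta$ is precisely $\tilde\Theta\cap\Theta_H = \tilde\Theta_H$.

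Third, apply Proposition~\ref{prop:symmetric_criticality} with $M = \tilde\Theta$ (a linear subspace, hence a manifold), the group $H$ acting through the restriction of its unitary action (which are diffeomorphisms, in fact linear isomorphisms, of $\tilde\Theta$), $f = L|_{\tilde\Theta}$ (smooth and $H$-invariant since $L$ is invariant under the global symmetry $H\le P$), and $\Sigma = \tilde\Theta_H$, which is a linear subspace of $\tilde\Theta$, hence a submanifold. The conclusion of the proposition then gives that critical points of $(L|_{\tilde\Theta})|_{\tilde\Theta_H} = L|_{\tilde\Theta_H}$ are critical points of $L|_{\tilde\Theta}$, which is exactly the claim.

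The main obstacle is the second step: carefully justifying that the algebraic hypothesis $(\Theta_H^\perp\cap\tilde\Theta)+(\Theta_H\cap\tilde\Theta)=\tilde\Theta$ upgrades to $H$-invariance of $\tilde\Theta$ (equivalently, that $\Theta_H^\perp\cap\tilde\Theta$ is $H$-stable), since the hidden symmetry $H$ by definition is \emph{not} a symmetry of the constrained function class and so does not obviously map $\tilde\Theta$ to itself. Everything downstream is a mechanical invocation of symmetric criticality; the content is showing that, although $H$ is "hidden," the hypothesis carves out exactly the situation in which $H$ nonetheless stabilizes the constrained subspace and the fixed set there is $\tilde\Theta_H$. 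I would also remark that since $\tilde\Theta$ and $\tilde\Theta_H$ are linear, "critical point of the restriction" means vanishing of the gradient of $L$ projected onto the respective subspace, so the conclusion can equivalently be phrased as: at any $\tilde\theta\in\tilde\Theta_H$, if $\nabla L(\tilde\theta)$ is orthogonal to $\tilde\Theta_H$ then it is orthogonal to $\tilde\Theta$ — which follows because the orthogonal complement of $\tilde\Theta_H$ inside $\tilde\Theta$ is $\Theta_H^\perp\cap\tilde\Theta$, on which the $H$-invariant gradient must vanish by an averaging argument.
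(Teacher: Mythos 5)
Your primary route --- upgrading the hypothesis to $H$-invariance of $\tilde\Theta$ and then invoking Proposition~\ref{prop:symmetric_criticality} on $M=\tilde\Theta$ --- has a genuine gap, and you correctly sensed it. The hypothesis $(\Theta_H^\perp\cap\tilde{\Theta})+(\Theta_H\cap\tilde{\Theta})=\tilde{\Theta}$ does \emph{not} force $\Theta_H^\perp\cap\tilde\Theta$ to be $H$-stable: for $u\in\Theta_H^\perp\cap\tilde\Theta$ you get $hu\in\Theta_H^\perp$ for free, but there is no reason $hu\in\tilde\Theta$, and no dimension count rescues this (an $H$-invariant subspace of $\Theta_H^\perp$ and a non-invariant one can both satisfy the sum condition). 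Worse, if $H$ \emph{did} stabilize $\tilde\Theta$, its restriction would be a global parameter symmetry of $\mathcal{F}_{\tilde\Theta}$, i.e.\ $H$ would sit inside $\tilde P$ and would not be hidden in the sense of Definition~\ref{def:hidden_symm}. So the situation the lemma is designed for is precisely the one where your main argument cannot be run: symmetric criticality on $\tilde\Theta$ is unavailable because $H$ does not act there.

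That said, your closing remark is the correct proof, and it is the one the paper gives. At $\theta^*\in\tilde\Theta_H\subset\Theta_H$, equivariance of the gradient gives $\nabla L(\theta^*)=h\nabla L(\theta^*)$ for all $h\in H$; averaging over $H$ (which by unitarity is the orthogonal projection onto $\Theta_H$) yields $\nabla L(\theta^*)\in\Theta_H$, hence $\nabla L(\theta^*)\perp\Theta_H^\perp\cap\tilde\Theta$. The sum hypothesis says every $t\in\tilde\Theta$ splits as $t_H+t_H^\perp$ with $t_H\in\Theta_H\cap\tilde\Theta=\tilde\Theta_H$ and $t_H^\perp\in\Theta_H^\perp\cap\tilde\Theta$; the first piece is killed by criticality of $\theta^*$ for $L|_{\tilde\Theta_H}$ and the second by the orthogonality just established, so $t\cdot\nabla L(\theta^*)=0$ for all $t\in\tilde\Theta$. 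Note this is where the hypothesis actually earns its keep --- it guarantees the orthogonal decomposition of $t$ relative to $\Theta_H$ stays inside $\tilde\Theta$ --- rather than providing $H$-invariance. You should promote that final paragraph to the body of the proof and discard the symmetric-criticality reduction.
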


The lemma requires an orthogonality condition $(\Theta_H^\perp\cap\tilde{\Theta})+(\Theta_H\cap\tilde{\Theta})=\tilde{\Theta}$. This condition holds in the case of equivariant networks when we reduce a transitive p-rep to a smaller transitive p-rep and is a relatively mild condition.

While not explored in this work, we suspect Lemma~\ref{lemma:hidden_symm_crit_pts} can be helpful for understanding scaling behavior of equivariant networks using loss landscape geometry arguments similar to \cite{simsek2021geometry}. 

If the hidden fixed point subspace $\tilde\Theta_H$ is one dimension lower than the constrained subspace, we have an even stronger statement. In this case, $\tilde\Theta_H$ in some sense splits $\tilde\Theta$ into two halves. We cannot traverse between the two halves via gradient descent because once we reach $\tilde\Theta_H$ we must get stuck. Generically, we expect the global minima to be in one of these halves so if we are initialized in the "bad" half, then we would never reach the global minima. This is depicted in Figure~\ref{fig:hidden_fixed_point_space}. Hence, we expect there to be an additional minimum on the other side.

To formally state the theorem, we require the following conditions and discuss their satisfiability.
\subsubsection*{Conditions:}
\begin{enumerate}[\hspace{10pt}\text{C}1]
    \item $(\Theta_H^\perp\cap\tilde{\Theta})+(\Theta_H\cap\tilde{\Theta})=\tilde{\Theta}$
    \\
    This is used to invoke Lemma~\ref{lemma:hidden_symm_crit_pts}. This condition can hold in the case of equivariant networks when neuron permutation symmetries can reduce a transitive p-rep to a smaller transitive p-rep.
    \item $\dim(\tilde{\Theta}_H)=\dim(\tilde{\Theta})-1$
    \\
    This is a strong condition. We expect this condition can sometimes hold in the first layer but will generally fail if the previous layer is wide.
    \item There exists some $C$ such that for any direction $\hat{r}\in\tilde{\Theta}$ we have $\hat{r}\cdot\nabla L(C\hat{r})>0$
    \\
    This condition will be satisfied if we include regularization. This is mainly used to draw a region with a minimum in its interior.
    \item The minimum of $L|_{\tilde{\Theta}_H}$ is nondegenerate in $L|_{\tilde{\Theta}}$
    \\
    This is primarily used to ensure we can fully characterize saddle points and minima via the Hessian. When we have degeneracy (singular Hessian matrix), it may result from some additional symmetry which we can exploit to preserve the theorem.
\end{enumerate}

\begin{theorem}[Hidden symmetry induced minima]
    Let $\mathcal{F}_{\tilde{\Theta}}\subset\mathcal{F}_{\Theta}$ be a class of constrained networks. Let $L:\Theta\to\mathbb{R}$ be a loss associated to these networks. Let $H$ be a hidden symmetry with unitary action and let $\Theta_H=\{\theta\in\Theta:h\theta=\theta\ \forall h\in H\}$ and $\tilde{\Theta}_H=\Theta_H\cap\tilde{\Theta}$.
    
    Suppose there is a minima $\theta_1$ of $L|_{\tilde{\Theta}}$ such that $\theta_1\notin\tilde{\Theta}_H$. If the conditions above hold, there must exist a distinct minimum $\theta_2\neq\theta_1$ of $L|_{\tilde{\Theta}}$.
    \label{thm:hidden_symm_minima}
\end{theorem}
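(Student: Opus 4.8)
The plan is to locate the second minimum in the closed ``bad half'' of $\tilde\Theta$ — the side of the hidden fixed-point hyperplane $\tilde\Theta_H$ not containing $\theta_1$ — since any minimum found there is automatically distinct from $\theta_1$. Because $\tilde\Theta_H=\Theta_H\cap\tilde\Theta$ is a linear subspace of $\tilde\Theta$ of codimension one, I would write $\tilde\Theta_H=\{v\in\tilde\Theta:\langle v,n\rangle=0\}$ for a unit vector $n\in\tilde\Theta$ and set $\tilde\Theta^{\pm}=\{v\in\tilde\Theta:\pm\langle v,n\rangle>0\}$; replacing $n$ by $-n$ if needed, we may assume $\langle\theta_1,n\rangle>0$ (nonzero since $\theta_1\notin\tilde\Theta_H$). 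Let $D=\{v\in\tilde\Theta:\langle v,n\rangle\le 0,\ \|v\|\le C\}$, a nonempty compact set (here $C>0$, since $\hat r\cdot\nabla L(C\hat r)>0$ cannot hold for both $\hat r$ and $-\hat r$ if $C=0$), and let $q$ minimize $L$ over $D$.

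The first step is to show $\|q\|<C$ using the coercivity hypothesis. The segment $s\mapsto sq$, $s\in[0,1]$, lies in $D$ since $D$ is star-shaped about $0$; if $\|q\|=C$, then the radial derivative $\frac{d}{ds}L(sq)|_{s=1}=q\cdot\nabla L(q)=C\,\hat r\cdot\nabla L(C\hat r)>0$ with $\hat r=q/C$, so $L$ strictly decreases moving inward along this segment, contradicting minimality. Hence $q$ has a ball neighborhood inside $B_C$. If $\langle q,n\rangle<0$, an entire $\tilde\Theta$-neighborhood of $q$ lies in $D$, so $q$ is a local minimum of $L|_{\tilde\Theta}$; since $\langle q,n\rangle<0<\langle\theta_1,n\rangle$ we have $q\ne\theta_1$, and $\theta_2:=q$ works.

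The remaining case is $q\in\tilde\Theta_H$, and this is where the work lies. Here $q$ minimizes $L$ over a neighborhood of $q$ in $\tilde\Theta_H$, so it is a critical point of $L|_{\tilde\Theta_H}$, and by Lemma~\ref{lemma:hidden_symm_crit_pts} (equivalently, $\tilde\Theta_H$ being invariant under the gradient flow of $L|_{\tilde\Theta}$) it is a critical point of $L|_{\tilde\Theta}$, so $\nabla(L|_{\tilde\Theta})(q)=0$. To promote $q$ to a local minimum of $L|_{\tilde\Theta}$: since $q$ minimizes $L$ over $\{v\in\tilde\Theta:\langle v-q,n\rangle\le 0\}$ near $q$ and the gradient there vanishes, the second-order Taylor expansion gives $u^{\top}\mathcal H u\ge 0$ for every tangent vector $u$ with $\langle u,n\rangle\le 0$, where $\mathcal H$ is the Hessian of $L|_{\tilde\Theta}$ at $q$. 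A quadratic form is even, and this half-space together with its negation is all of $\tilde\Theta$, so $\mathcal H\succeq 0$; by the nondegeneracy hypothesis $\mathcal H\succ 0$, so $q$ is a strict local minimum of $L|_{\tilde\Theta}$. As $q\in\tilde\Theta_H$ while $\theta_1\notin\tilde\Theta_H$, again $q\ne\theta_1$ and $\theta_2:=q$ works.

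I expect the main obstacle to be precisely this last case: the minimizer of $L$ over the bad closed half-ball may land exactly on the dividing hyperplane, and one must certify it is a genuine minimum of the ambient restricted loss rather than a saddle — which consumes both the transfer-of-criticality from Lemma~\ref{lemma:hidden_symm_crit_pts} and the nondegeneracy of the Hessian (via the ``even quadratic form nonnegative on a half-space is nonnegative everywhere'' observation); the coercivity hypothesis is used only to keep $q$ off the sphere $\|v\|=C$. A minor point worth confirming along the way is that the codimension-one condition $\dim\tilde\Theta_H=\dim\tilde\Theta-1$ either implies, or should be supplemented by, the hypothesis $(\Theta_H^{\perp}\cap\tilde\Theta)+\tilde\Theta_H=\tilde\Theta$ of Lemma~\ref{lemma:hidden_symm_crit_pts}, which is what makes $\tilde\Theta_H$ gradient-flow invariant.
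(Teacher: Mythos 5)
Your proof is correct, and it reaches the conclusion by a genuinely different route from the paper's. The paper first minimizes $L$ over the disk $\{\theta\in\tilde\Theta_H:|\theta|\le C\}$ \emph{inside the hyperplane}, transfers criticality to $L|_{\tilde\Theta}$ via Lemma~\ref{lemma:hidden_symm_crit_pts}, and then splits into two cases: either that point is already an ambient local minimum, or it is a nondegenerate saddle, in which case the negative-curvature direction (which must have a nonzero component along the normal $\hat{n}$ because the point minimizes $L$ along the hyperplane) produces strictly lower interior points in \emph{both} closed hemispheres, so each hemisphere contains an interior minimum and two distinct minima result. You instead minimize directly over the closed bad half-ball, and in the delicate case where the minimizer lands on $\tilde\Theta_H$ you use the one-sided second-order optimality condition together with the evenness of quadratic forms to force the Hessian to be positive semidefinite, hence positive definite by nondegeneracy, hence a genuine local minimum — that is, you rule out the paper's saddle case for this particular point rather than exploiting it. Both arguments consume the same ingredients (the radial gradient condition to keep minimizers off the sphere $|\theta|=C$, Lemma~\ref{lemma:hidden_symm_crit_pts} for criticality on the hyperplane, and nondegeneracy of the Hessian), but yours is somewhat more economical, needing a single constrained minimization, while the paper's yields the marginally stronger conclusion that each open half contains its own minimum whenever the hyperplane critical point is a saddle. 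Your closing caveat is well taken and is not a defect of your argument alone: the theorem as stated omits the hypothesis $(\Theta_H^\perp\cap\tilde\Theta)+\tilde\Theta_H=\tilde\Theta$ required by Lemma~\ref{lemma:hidden_symm_crit_pts}, and codimension one does not imply it in general (one can arrange $\Theta_H^\perp\cap\tilde\Theta=\{0\}$ while $\tilde\Theta_H$ has codimension one in $\tilde\Theta$); the paper's proof invokes the lemma without checking this, so the gap is shared, and you are the one who flags it.
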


\begin{figure}[!h]
    \centering
    \includegraphics[width=0.5\textwidth]{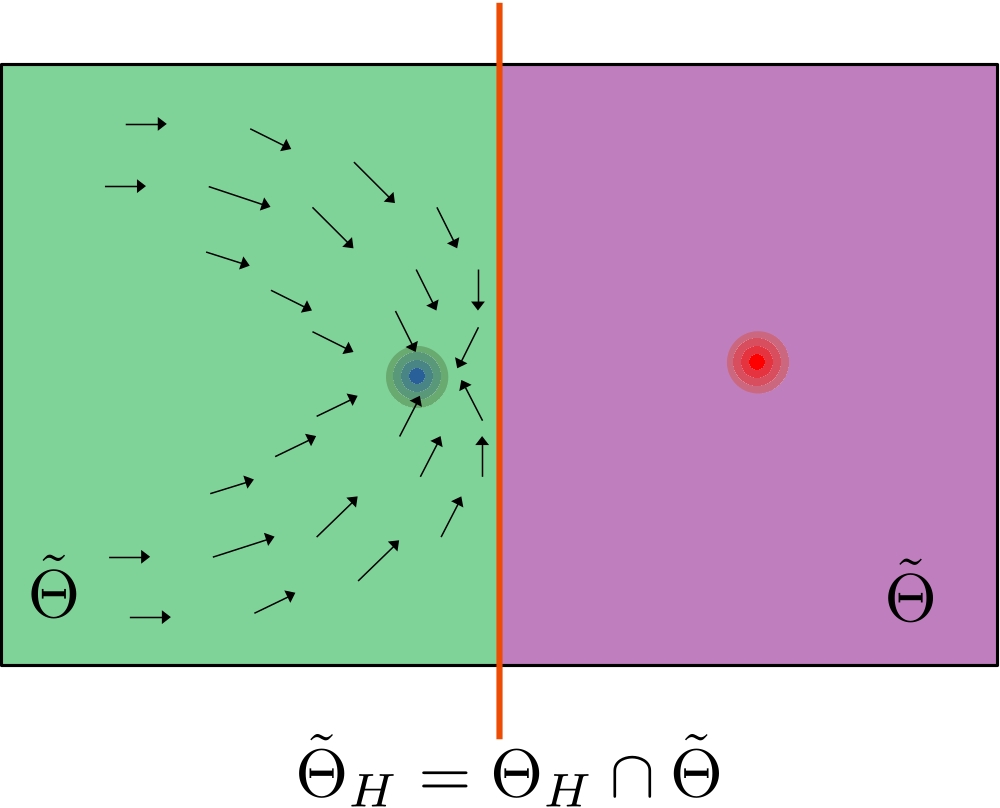}
    \caption{Constrained subspace split into two halves by intersection with a fixed point subspace of the unconstrained model. We call the subspace separating the constrained space a hidden fixed point subspace. Gradient flows from the two halves cannot reach each other implying existence of separate minima.}
    \label{fig:hidden_fixed_point_space}
\end{figure}

We would like to emphasize that the parameter space being split into two halves by a fixed point subspace is not considered a problem. This is because the two halves of parameter space are related by parameter symmetry and represent the same set of possible functions. So the extra minima could just be another global minima reached through parameter symmetry. However, for the constrained space, the two halves can arise from a \textbf{hidden} parameter symmetry. In this case, the two halves of the constrained space are not necessarily related by some other global parameter symmetry and the additional minima could truly be a spurious one. In Section~\ref{sec:experiments} we show this can indeed be the case for equivariant networks.

\subsection{Hidden fixed point spaces of equivariant networks}

We now characterize hidden fixed point spaces of networks built using p-reps. In particular, we consider the parameter symmetries corresponding to permuting neurons. In fact, we provide a slightly more general characterization of not just the hidden fixed point spaces but the hidden "reducible" ones based on the concept of "irreducible" neurons from \citep{simsek2021geometry}.


First, we show that for a given transitive block, any hidden fixed point subspace corresponds to a smaller transitive p-rep. We expect this scenario could negatively impact optimization.
\begin{theorem}
    Let a $m$-neuron point be a transitive permutation representation. Suppose the weights satisfy the corresponding equivariance constraints. If this $m$-neuron point is reducible, then either we can eliminate these neurons or we can replace the transitive permutation representation with a $n$-neuron transitive permutation representation where $n<m$ and $n|m$.

    \label{thm:transitive_perm_reduce}
\end{theorem}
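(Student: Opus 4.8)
The plan is to realize the $m$-neuron transitive block as the $G$-set $G/K$, where $K=\{g\in G:g\cdot 1=1\}$ is the stabilizer of neuron $1$ --- by the Lemma classifying transitive p-reps, $K$ has index $m$ and the block is the permutation action on $G/K$ --- and then to translate the equivariance constraints into coset language. Identify neuron $g\cdot 1$ with the coset $gK$; write $w_i$ for the $i$-th row of the incoming weight matrix $W_{\mathrm{in}}$ and $u_i$ for the $i$-th column of the outgoing weight matrix $W_{\mathrm{out}}$. Since $\rho_m(g)$ merely permutes the standard basis by $e_i\mapsto e_{g\cdot i}$, the constraints $W_{\mathrm{in}}\rho_{\mathrm{in}}(g)=\rho_m(g)W_{\mathrm{in}}$ and $\rho_{\mathrm{out}}(g)W_{\mathrm{out}}=W_{\mathrm{out}}\rho_m(g)$ unwind to $w_{g\cdot i}=w_i\,\rho_{\mathrm{in}}(g)^{-1}$ and $u_{g\cdot i}=\rho_{\mathrm{out}}(g)\,u_i$ for all $g\in G$. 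By transitivity these say that all incoming weights form a single $G$-orbit and all outgoing weights form a single $G$-orbit; in particular one incoming (resp.\ outgoing) weight vanishes iff they all do.

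Next I would identify the partition of neurons by equality of incoming weights. Let $K_1=\{g\in G:w_1\,\rho_{\mathrm{in}}(g)=w_1\}$, a subgroup of $G$. Plugging $k\in K$ (so $k\cdot 1=1$) into the incoming relation gives $w_1=w_1\,\rho_{\mathrm{in}}(k)^{-1}$, hence $K\leq K_1$; and $w_{gK}=w_{hK}$ iff $w_1\rho_{\mathrm{in}}(g)^{-1}=w_1\rho_{\mathrm{in}}(h)^{-1}$ iff $gK_1=hK_1$. Thus the map ``neuron $\mapsto$ incoming weight'' factors through the surjection $G/K\twoheadrightarrow G/K_1$; its fibers (the sets of neurons sharing an incoming weight) all have size $[K_1:K]$, so the number $n$ of distinct incoming weights equals $[G:K_1]$ and $m=[G:K]=n\,[K_1:K]$, which gives $n\mid m$. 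If the block is reducible (and $m\geq 2$), then either two incoming weights coincide, forcing $K\lneq K_1$ and hence $n<m$; or all incoming weights are distinct and nonzero but some, hence every, outgoing weight is zero, in which case the block contributes nothing and can be replaced by a single dead neuron ($n=1$). I would treat that degenerate case in one line and focus on $K\lneq K_1$.

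To conclude, construct the $n$-neuron block with one neuron per coset $c\in G/K_1$, incoming weight $w_c$ (the common value on $c$) and outgoing weight $u_c:=\sum_{i\in c}u_i$. Because a pointwise nonlinearity returns equal outputs on equal pre-activations, biases are automatically constant on a transitive block, and the outgoing weights enter the next layer linearly, this merge leaves the network's function unchanged --- exactly the reduction depicted in Figure~\ref{fig:fixed_point}. The new block is again a transitive p-rep: the action $w_c\mapsto w_c\rho_{\mathrm{in}}(g)^{-1}$ permutes the $n$ distinct incoming weights transitively, realizing the transitive p-rep attached to $K_1$. It obeys the equivariance constraints: $w_{g\cdot c}=w_c\rho_{\mathrm{in}}(g)^{-1}$ is inherited, and since $g$ carries the merge class $c$ onto the merge class $g\cdot c$ we get $u_{g\cdot c}=\sum_{i\in g\cdot c}u_i=\sum_{j\in c}u_{g\cdot j}=\sum_{j\in c}\rho_{\mathrm{out}}(g)u_j=\rho_{\mathrm{out}}(g)u_c$.

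I expect the crux to be the middle step: correctly translating the two equivariance constraints into coset form and then recognizing the ``equal incoming weights'' partition as a block system of the transitive $G$-set $G/K$, since it is the standard correspondence between block systems and intermediate subgroups $K\leq K_1\leq G$ that yields both $n<m$ and the divisibility $n\mid m$ simultaneously. The only other point needing an argument is that the summed outgoing weights stay equivariant, which reduces to the observation that $G$ respects the merge partition; everything else is bookkeeping.
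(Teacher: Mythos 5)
Your proposal is correct and follows essentially the same route as the paper's proof: both translate the equivariance constraints into coset language, identify an intermediate subgroup $K\le K_1\le G$ on whose cosets the incoming weight rows are constant (the paper generates such a subgroup from the point stabilizer together with one element identifying the two coincident rows, while you take the full weight stabilizer, giving the maximal reduction), and then merge neurons coset-by-coset with summed outgoing weights, verifying equivariance of the reduced block. The one notable difference is that you explicitly dispatch the zero-outgoing-weight branch of reducibility, which the paper's written proof leaves unfinished.
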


\begin{figure}[!h]
    \centering
    \subfloat[]{\includegraphics[width=0.35\textwidth]{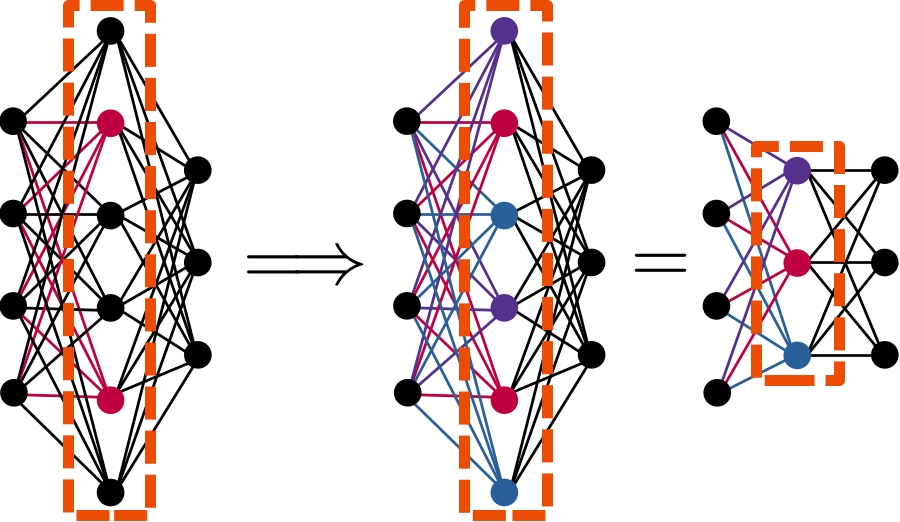}}\hspace{40pt}
    \subfloat[]{\includegraphics[width=0.35\textwidth]{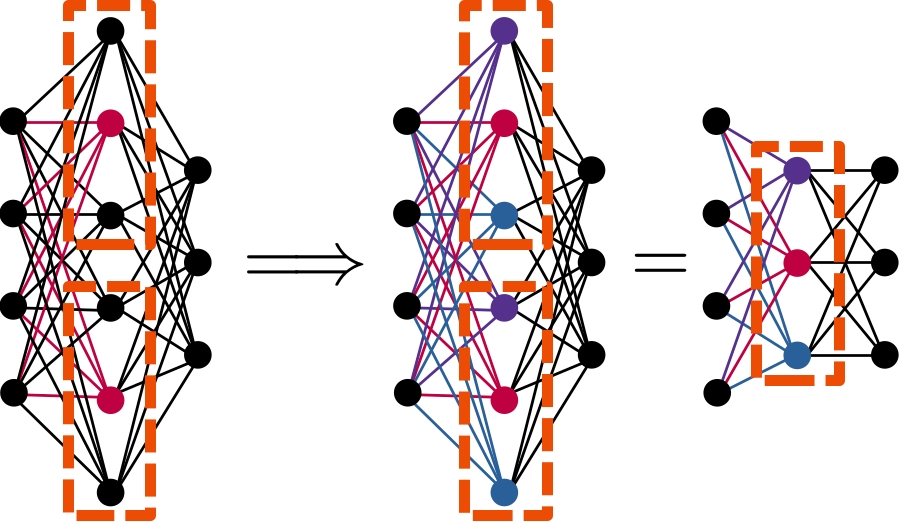}}
    \caption{(a) If two neurons within a block of a transitive p-rep share input weights, then multiple other pairs of neurons must also share the same input weights. The p-rep can be reduced into a smaller one. (b) Suppose two blocks of neurons each individually are irreducible neurons and transitive p-reps. Then if two neurons share input weights, then input weights for all other neurons are shared. The two transitive p-reps can be compressed into one transitive p-rep.}
    \label{fig:perm_rep_reductions}
\end{figure}

Next, we show that two transitive p-reps which share neurons must collapse to effectively one transitive p-rep. It turns out this same subspace is also a fixed point subspace corresponding to permuting the entire two blocks. Hence, we do not expect it to negatively impact optimization.
\begin{theorem}
    Let there be transitive p-reps defined on distinct blocks of $m$-neurons and $m'$-neurons in layer $\ell$, each of which is irreducible. Suppose the $m+m'$-neuron point combining these neurons is reducible. Then they must both be the same permutation representation and we can replace the $m+m'$-neuron block with a $m$-neuron point.

    \label{thm:transitive_perms_compress}
\end{theorem}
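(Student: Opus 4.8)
The plan is to read off the incoming weight vectors of the two blocks from the equivariance constraint, note that each transitive block's incoming weights form a single group orbit whose size equals the block size, show that sharing a single pair of neurons forces the two orbits to coincide, and finish by a neuron-by-neuron merge. Concretely, write the layer feeding the $(m+m')$-neuron block as a matrix whose rows are the incoming weight vectors $w_1,\dots,w_m$ of block one and $w'_1,\dots,w'_{m'}$ of block two, and let the block carry the permutation actions $\pi:G\to\Sym(m)$ and $\pi':G\to\Sym(m')$. Equivariance with respect to the representation $\rho$ on the previous layer says, row by row, that $w_{\pi(g)(i)} = w_i\,\rho(g)^{-1}$ for all $g,i$ (and similarly for the primed block). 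Hence, since $\pi$ is transitive, $\{w_1,\dots,w_m\}$ is a single $G$-orbit under the action $g\cdot v := v\,\rho(g)^{-1}$; and since the block is irreducible these $m$ vectors are pairwise distinct and nonzero, so the orbit has exactly $m$ elements. The same holds for the primed block with $m'$.

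Next I would use reducibility of the combined block. Irreducibility of each block separately rules out coincidences or zero weight vectors within a block, so the combined block can only be reducible because some neuron $a$ of block one and some neuron $b$ of block two satisfy $w_a = w'_b \neq 0$. Then the $G$-orbit of this common vector equals both $\{w_1,\dots,w_m\}$ and $\{w'_1,\dots,w'_{m'}\}$; comparing sizes gives $m=m'$ and the two weight sets coincide. Distinctness within each block makes the assignment ``neuron $\mapsto$ incoming weight'' injective on each block, so there is a unique bijection $\sigma$ between the two blocks with $w_i = w'_{\sigma(i)}$; because $\pi$ and $\pi'$ act on neurons by the identical rule $v\mapsto v\rho(g)^{-1}$ on matched weights, $\sigma$ intertwines $\pi$ and $\pi'$. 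Thus the two permutation representations are isomorphic as $G$-sets, hence the same transitive p-rep by the Lemma on transitive permutation representations.

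Finally I would carry out the merge: for each $i$, replace the pair $(i,\sigma(i))$ by one neuron whose incoming weight is $w_i$ and whose outgoing weight is the sum of the outgoing weights of $i$ and $\sigma(i)$. The paired neurons have identical incoming weights and the nonlinearity is pointwise (the p-rep setting), so they have identical activations and the merge leaves the network function unchanged. Linearity of the equivariance constraints, together with the fact that $\sigma$ intertwines the two actions, shows the merged $m$-neuron block still satisfies the incoming and outgoing equivariance constraints and carries the transitive permutation representation $\pi$. This is exactly the claimed replacement of the $(m+m') = 2m$-neuron block by a single $m$-neuron transitive p-rep.

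I expect the main obstacle to be the step from ``one shared pair of neurons'' to ``the two full weight sets coincide and $\sigma$ is $G$-equivariant'': this is precisely where irreducibility of each individual block is indispensable, since distinctness of within-block incoming weights both pins the orbit size to the block size and turns neuron $\leftrightarrow$ weight into a bijection, which is what upgrades a pointwise coincidence into a genuine $G$-equivariant identification. The remaining bookkeeping---keeping the orbit action $g\cdot v = v\rho(g)^{-1}$ consistent with the chosen convention for $\pi$, and the closing equivariance check on the merged block---is routine by linearity.
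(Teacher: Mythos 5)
Your proof is correct, and it reaches the conclusion by a route that is dual to, but cleaner in places than, the paper's. The paper fixes the colliding pair $k,k'$ and compares \emph{stabilizers}: it shows that if some $s\in\Stab_G(k)\setminus\Stab_G(k')$ existed, equivariance would force $W'_{k'}=W'_{\pi'_s(k')}$ with $\pi'_s(k')\neq k'$, contradicting irreducibility of the second block; equality of stabilizers plus transitivity then identifies the two p-reps (via the classification of transitive p-reps by conjugacy classes of subgroups), and the row-by-row bijection is asserted from the fact that equivariance determines all remaining rows. You instead compare \emph{orbits}: each irreducible transitive block's incoming rows form a single free-enough $G$-orbit of size exactly equal to the block size under $g\cdot v=v\rho(g)^{-1}$, so a single shared row forces the two orbits — hence the two weight sets — to coincide and gives $m=m'$ immediately; injectivity of neuron $\mapsto$ weight then produces an explicit intertwiner $\sigma$ with $\sigma\circ\pi_g=\pi'_g\circ\sigma$, so you never need to invoke the conjugacy-class lemma. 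Both arguments use irreducibility in exactly the same load-bearing way (distinctness of within-block rows), and your observation that individual irreducibility already rules out zero in/out-going weights, so the only new reducibility mode for the combined block is a cross-block collision, is a point the paper leaves implicit. Your explicit merge step (summed outgoing weights, equivariance of the merged $U$ via the intertwiner) is also more complete than the paper's closing sentence. No gaps.
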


\section{Experiments}
\label{sec:experiments}

For our experiments, we consider a teacher-student setup so we can guarantee perfect loss is achievable. We use 2-layer networks with trainable weights only between the input and hidden layer. These are of the form
\[f(x;\theta)=\sum_{i}\sigma\left(\sum_{j}\theta_{ij} x_j\right)\]
where $\sigma$ is our nonlinearity. For the experiments here, we use $\texttt{ReLU}$. Results with other activations can be found in Appendix~\ref{sec:additiona_experiments}. All of our experiments can be run on a laptop.

We assume data for each of our input neurons is draw independently at random from a unit normal distribution. We use the teacher networks to label the corresponding output values. For such a setup, it turns out it is possible to analytically calculate the exact loss landscape for certain activation functions \citep{tian2017analytical,saad1995dynamics}. For our loss landscape visualization, we use this exact loss landscape. 

\subsection{Permutation equivariance}
In this section, we consider permutation symmetry $S_n$ and our example uses $S_3$. In Appendix~\ref{sec:additiona_experiments}, we also consider spherical signals used in $\mathrm{O}(3)$ and $\mathrm{SO}(3)$ equivariant networks.

We consider 2 types of p-reps. First, we have the trivial representation $\pi_0$ which is $1$-dimensional. We then have the usual action of $S_n$ which permutes $n$ orthogonal basis vectors, which gives a $n$-dimensional representation $\pi_n$. In our code, we also have support for the regular representation $\pi_{\mathrm{reg}}$ which is $|S_n|=n!$-dimensional and is found in group convolution \citep{cohen2016group}.

Between $\pi_0$ and $\pi_n$, it is not hard to check the only possible equivariant linear map is of form $\theta\mathbf{1}$ where $\mathbf{1}$ is a $n\times 1$ matrix consisting of only 1's. Between $\pi_1$ and $\pi_1$, possible equivariant maps are of the form $\theta_1I+\theta_2(\mathbf{1}\mathbf{1}^T-I)$ where $I$ is the identity matrix \citep{zaheer2017deep}. We can think of $\theta_1$ and $\theta_2$ as parameterizing diagonal and off diagonal terms respectively. 

\subsection{Loss landscape visualization}

\begin{figure}[!h]
    \centering
    \subfloat[]{\includegraphics[width=0.25\textwidth]{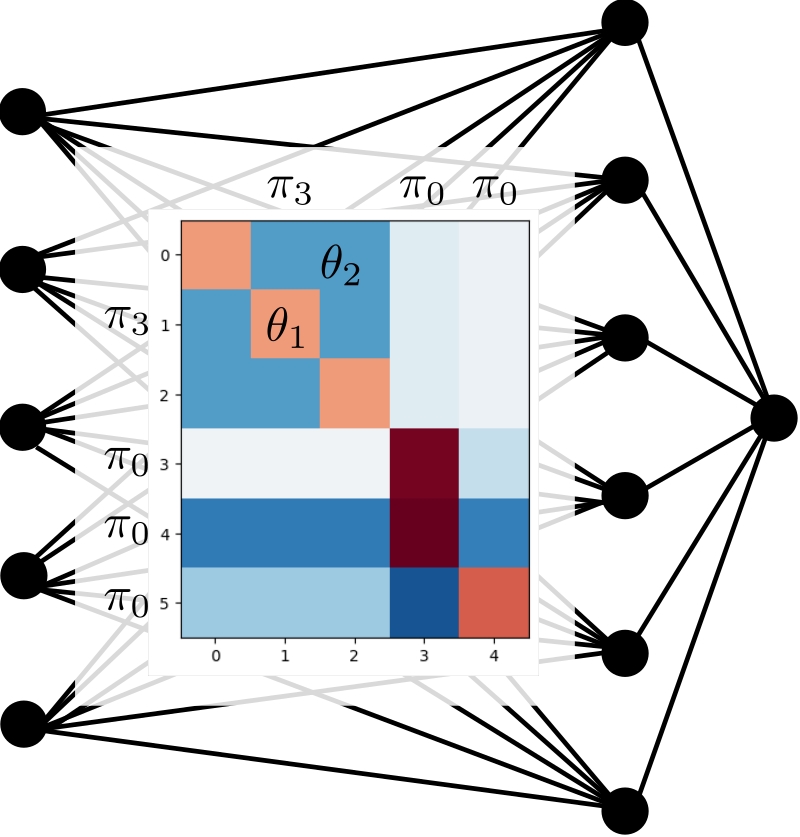}}\hspace{40pt}
    \subfloat[]{\includegraphics[width=0.4\textwidth]{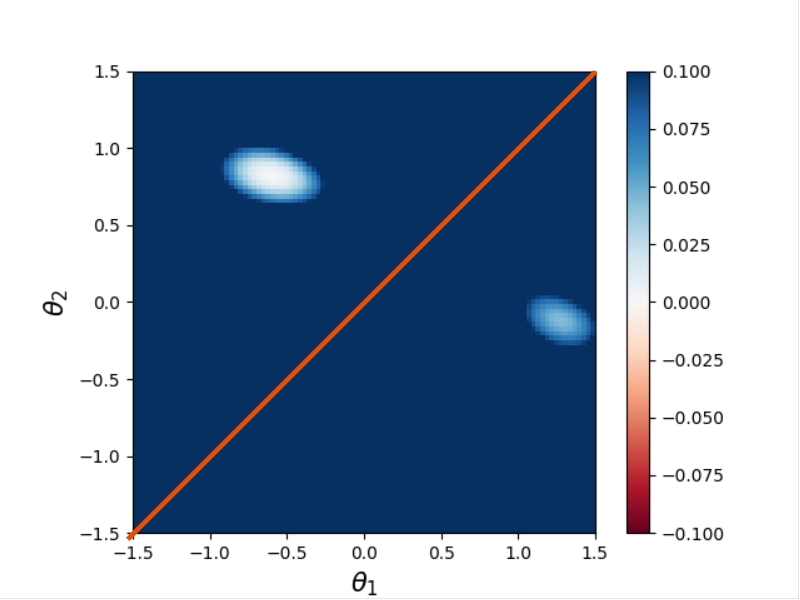}}
    \caption{(a) Weights of the teacher network. (b) Loss of the student network as we vary the diagonal and off diagonal weights of the $\pi_3\to\pi_3$ map. Other weights of the student network are set equal to that of the teacher network. Red line corresponds to the fixed point subspace where $\theta_1=\theta_2$.}
    \label{fig:loss_landscape}
\end{figure}
For a $\pi_n$ block in the hidden layer, we note there is a hidden symmetry corresponding to permuting the corresponding $n$ neurons. The resulting hidden fixed point subspace happens exactly the weights to these neurons are all the same. If we have a single $\pi_n$ block in the input, then the linear map $\theta_1I+\theta_2(\mathbf{1}\mathbf{1}^T-I)$ is in the hidden fixed point subspace when $\theta_1=\theta_2$. In particular, the degrees of freedom is reduced exactly by 1 so Theorem~\ref{thm:hidden_symm_minima} applies and we expect there to be an additional induced minima. Note one can check that additional $\pi_0$ representations in the input or hidden layer do not affect this.

Hence, we choose $S_3$ and a $5$-dimensional input consisting of one $\pi_3$ rep and 2 trivial $\pi_0$ reps. In the hidden layer we choose one $\pi_0$ rep and $3$ trivial $\pi_0$ reps. We randomly generate weights for the teacher network. We then set all other weights of the student network to equal that of the teacher network and only vary the diagonal and off diagonal weights $\theta_1$ and $\theta_2$ controlling the mapping from $\pi_3$ to $\pi_3$. Figure~\ref{fig:loss_landscape} shows the resulting loss landscape for a random seed. We clearly see two distinct minima in the loss landscape. In Appendix~\ref{sec:additiona_experiments}, we initialize teacher weights using different random seeds and see similar results.

\subsection{Training, bad initializations, and parameterization}
For each value of $\theta_1,\theta_2$, we trained the student network for 100 steps at a learning rate of $10^{-1}$ using gradient descent on the exact loss. This is usually sufficient for the student network to converge. The resulting final loss is shown in Figure~\ref{fig:final_loss_unnormed}. Note that indeed we see two regions, one which converges to a global minima and one which does not. However, the boundary is not a straight line. It turns out this is because gradient descent behavior depends on parameterization and in particular, on the relative scaling of the parameters \citep{kristiadi2023geometry}. By rescaling our parameterization to $\sqrt{n-1}\theta_1I+\theta_2(\mathbf{1}\mathbf{1}^T-I)$ and then performing training, we obtain Figure~\ref{fig:final_loss_normed}. Here, the boundary between the two regions is indeed a straight line and corresponds to the hidden fixed point space. We emphasize, however, that no matter how we rescale our parameters, there will always be a boundary. In Appendix~\ref{sec:additiona_experiments}, we generate teacher weights using different random seeds and see similar results.
\begin{figure}[!h]
    \centering
    \subfloat[\label{fig:final_loss_unnormed}]{\includegraphics[width=0.3\textwidth]{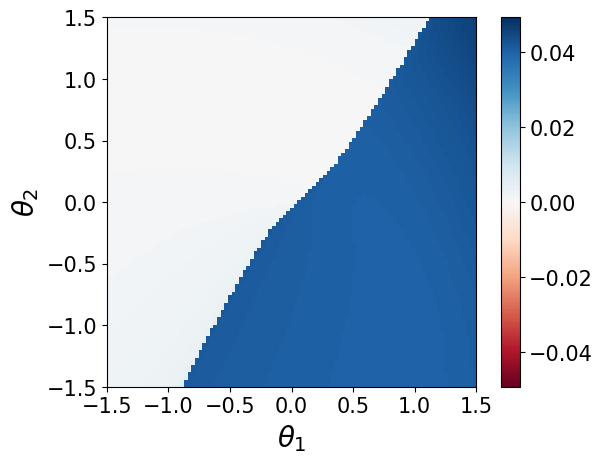}}\hspace{40pt}
    \subfloat[\label{fig:final_loss_normed}]{\includegraphics[width=0.3\textwidth]{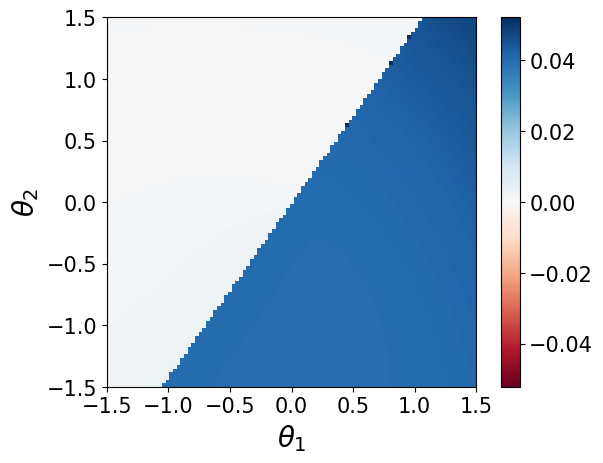}}
    \caption{Final loss achieved by student network after training for 100 steps. (a) Here we parameterize the diagonal and off diagonal components directly. The boundary is curved in this case. (b) Here we rescale the diagonal component. We have a straight line as boundary in this case which corresponds to the hidden fixed point subspace.}
    \label{fig:final_loss}
\end{figure}

\subsection{Relaxation}
Using the same setup as before, we randomly choose teacher network weights and choose a bad initialization for the student network. We train the student network until it converges to the bad minima. Then, we relax the equivariance constraints completely and continue training until the network converges. The weights of the teacher network, bad minima found by the constrained network, and good minima found by the relaxed network are shown in Figure~\ref{fig:weights}.

\begin{figure}[!h]
    \centering
    \subfloat[global minima]{\hspace{10pt}\includegraphics[width=0.15\textwidth]{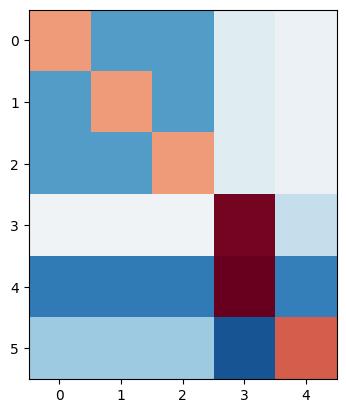}\hspace{10pt}}\hspace{40pt}
    \subfloat[spurious minima]{\hspace{10pt}\includegraphics[width=0.15\textwidth]{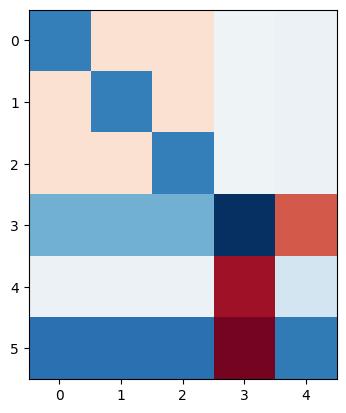}\hspace{10pt}}\hspace{20pt}
    \subfloat[relaxed minima]{\hspace{10pt}\includegraphics[width=0.15\textwidth]{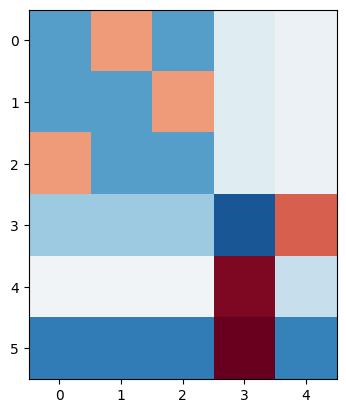}\hspace{10pt}}
    \caption{(a) Weights of the teacher network. (b) Spurious minima reached by equivariant student network after training. (c) Global minima reached by unconstrained network initialized at the spurious minima.}
    \label{fig:weights}
\end{figure}

Note that the final weights found by the relaxed network are simply permuted with respect to the teacher weights. However, importantly this corresponds to a different group representation in the hidden layer. In order to traverse between these different choices, we must break equivariance. This process is tracked in Figure~\ref{fig:trackers}. We initially have a negative Hessian eigenvalue, indicating that in the relaxed network the spurious minima becomes a saddle point. As we traverse between the two different equivariant subspaces, we see a bump in equivariance error. However, our loss still decreases. Finally, we arrive at a global minima in a different equivariant subspace and we return to no equivariance error. Note that the Hessian becomes positive definite indicating that we have indeed reached a minima not a saddle.

\begin{figure}[!h]
    \centering
    \subfloat[]{\includegraphics[width=0.5\linewidth]{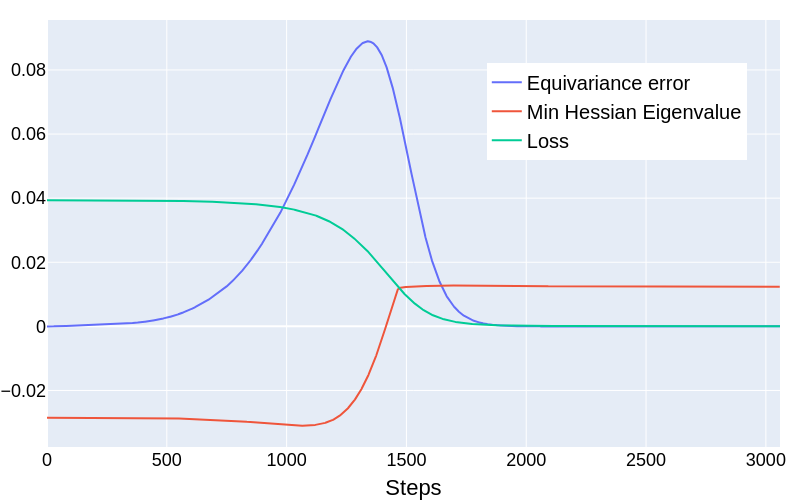}}\hspace{50pt}
    \subfloat[]{\includegraphics[width=0.35\linewidth]{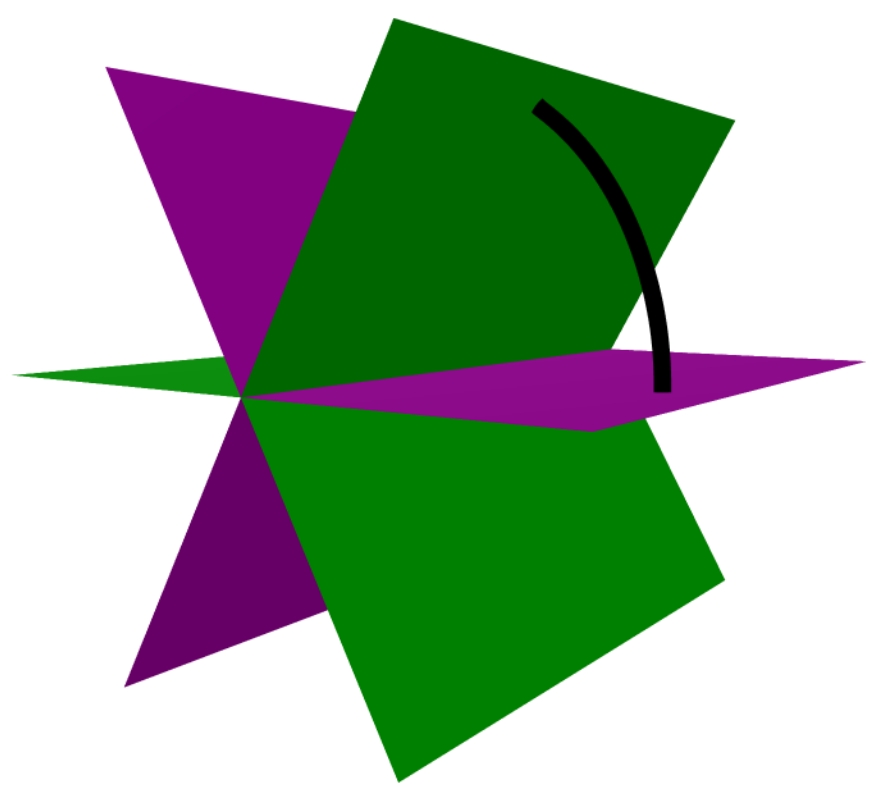}}
    \caption{Training of unconstrained network initialized at bad minima. We break out of the original equivariant subspace into a symmetrically related equivariant subspace. In particular, the "bad" half of the original constrained subspace is closer to the "good" half of the symmetrically related equivariant subspace. (a) Depicts values of various quantities during training of the relaxed model. (b) Projection of the parameters onto a $3$-dimensional subspace. The planes correspond to various symmetrically related equivariant subspaces color coded by good and bad halves containing the good or bad minimum respectively.}
    \label{fig:trackers}
\end{figure}

\section{Conclusion}

In this work, we presented a general framework for investigating loss landscapes of constrained networks. In particular, we proved that the parameter symmetries of the unconstrained networks leads to hidden fixed point spaces which can create critical points and even spurious minima in the constrained networks. We then apply this to equivariant networks built with permutation representations, which can be viewed as a subspace of unconstrained MLPs. We characterize the hidden fixed point subspaces of these networks. Finally, we experimentally demonstrate the existence of these spurious minima using a teacher-student setup. Further, we empirically find that removing the equivariance constraints can sometimes convert the spurious minima to a saddle allowing the network to escape. Interestingly, the final relaxed weights are a permuted version of the teacher weights and corresponds to a different choice of group representation in the hidden layer and hence a different equivariant subspace.

There are still a number of questions to be addressed in future works. First, our results require viewing a given model as a subspace of a larger model class. Finding larger model classes which give useful insights is often challenging. In particular, it is not obvious to us how to do so for equivariant models using tensor product operations. Next, Lemma~\ref{lemma:hidden_symm_crit_pts} and Theorem~\ref{thm:hidden_symm_minima} only look at fixed point subspaces. However, these subspaces often have additional local parameter symmetries and we expect taking these into account can give a stronger, more broadly applicable statement. In addition, we do not concretely understand when relaxation converts bad symmetry induced minima into saddle points. Finally, we believe that exploring relaxation techniques which change our choice of group representation could prove to be immensely helpful and be a fruitful area for future studies.

\label{sec:conclusion}

\newpage
\begin{ack}

We acknowledge the support of the National Science Foundation under Cooperative Agreement
PHY-2019786 (The NSF AI Institute for Artificial Intelligence and Fundamental Interactions). YuQing Xie was also supported by the National Science Foundation Graduate Research Fellowship under Grant No. DGE-1745302.

We would also like to thank our helpful discussions with Robin Walters, Elyssa Hofgard, Hannah Lawrence, Vasco Portilheiro, and Yuxuan Chen.


\end{ack}


\bibliography{bibliography}
\bibliographystyle{plain}


\appendix



\newpage
\section{Notation}
\label{sec:notation}



For convenience, we provide a table of some of the commonly used symbols in our paper and their typical meaning.
\begin{table}[h!]
    \centering
    \caption{Commonly used symbols}
    \vspace{0.15in}
    \begin{tabular}{rp{0.8\textwidth}}
        $G$ & Group our network is equivariant under\\
        $X$ & Typically used to represent the space of our inputs\\
        $Y$ & Typically used to represent the space of our outputs\\
        $\rho$ & A group representation $\rho:G\to\mathrm{GL}(V)$\\
        $x$ & Input\\
        $y$ & Output\\
        $\Theta$ & Weight space of unconstrained networks\\
        $f_\theta$ & Function $f_\theta: X\to Y$ parameterized by $\theta\in\Theta$\\
        $P$ & Group of parameter symmetries\\
        $\tilde{\Theta}$ & Linear subspace of parameters corresponding to enforcing some constraint\\
        $\tilde{P}$ & Parameter symmetries of constrained class of networks, in particular we must have $p\tilde{\Theta}\in\tilde{\Theta}$ for all $p\in\tilde{P}$\\
        $H$ & Hidden parameter symmetry group, $H\leq P$ and $H\nleq\tilde{P}$ which arises because our choice of constraint enforcement breaks the unconstrained parameter symmetries\\
        $\Theta_H$ & Fixed point subspace, we have $h\theta_H=\theta_H$ for all $\theta_H\in\Theta_H$ and $g\in H$\\
        $\tilde{\Theta}_H$ & Hidden fixed point subspace, equal to $\Theta_H\cap\tilde{\Theta}_H$\\
        $I$ & Identity matrix\\
        $\mathbf{1}$ & Constant matrix of 1s\\
        $\rho$ & Group representation. \\
        $\pi$ & Permutation representation. 
    \end{tabular}
    \label{tab:symbols}
\end{table}
\section{Other related works}
\label{sec:related_works}

Previously the optimization dynamics of equivariant, non-equivariant, and non-equivariant augmented networks were studied in \cite{nordenfors2023optimization}. This work showed that the sets of critical points of an equivariant network remain critical points when we expand to an unconstrained space if we also include data augmentation. In particular, under full augmentation, the chosen equivariant subspace is invariant under gradient flow. Our work is complementary in that we show how critical points can arise in the equivariant subspace due to interactions with parameter space symmetries. In addition, our experiments show a network leaving an equivariant subspace. This is consistent with \cite{nordenfors2023optimization} since we trained using SGD where our individual batches are not fully augmented. This means we can have gradient updates which leave the equivariant subspace.

Another very interesting series of works studies families of minima in 2-layer teacher-student \texttt{ReLU} setups \cite{arjevani2020analytic, arjevani2021analytic}. This is a similar setup to that used in our toy experiments with a model of the form
\[f(x;W_1,W_2)=W_2\sigma(W_1x)\]
where $W_1,W_2$ are trainable weights and $\sigma$ is a pointwise nonlinearity chosen to be \texttt{ReLU}. In our setup, we fix $W_2$ to be a constant matrix consisting only of 1's. In \cite{arjevani2020analytic,arjevani2021analytic}, they choose a teacher network such that $W_1=I$ and $W_2$ consists of all 1's. The corresponding critical points and Hessian spectrum for the loss of the student network can be analyzed by exploiting symmetries of the teacher network from which they provide a characterization of various families of minima. It turns out this analysis has intimate connections with our main theorems and the results of \cite{nordenfors2023optimization}. We briefly provide a sketch of this connection below.

In particular \cite{arjevani2021analytic} looks at $W_1$ isotropic under $\Delta S_m\times\Delta S_{n-m}$: consisting of simultaneously permuting the first $m$ rows and columns, or simultaneously permuting the last $n=m$ rows and columns. It turns out such $W_1$ correspond exactly to those satisfying equivariance constraints for $G=S_m\times S_{n-m}$ where we choose $\pi_n\times\pi_0$ and $\pi_0\times\pi_{n-m}$ representations for the first $m$ and next $n-m$ neurons in the hidden layer. The minima families in \cite{arjevani2021analytic} correspond exactly to those implied by our Theorem~\ref{thm:hidden_symm_minima} if constrained to these symmetric $W_1$; specifically weights which have blocks where diagonal values are less than the off diagonal ones. Because the input distribution is unit Gaussian and hence also symmetric under $G$, the results of \cite{nordenfors2023optimization} implies that even if we relax the constraints on $W_1$, the induced minima implied by our Theorem~\ref{thm:hidden_symm_minima} remain critical points. The more detailed analysis of \cite{arjevani2021analytic} show that these are minima for wide enough inputs.
\section{Group theory background}
\label{sec:detailed_back}
Here, we present a brief overview of concepts from group theory. We refer to standard textbooks for a more comprehensive treatment \cite{dresselhaus2007group,dummit2004abstract,kurzweil2004theory,artin2011algebra}. We begin by defining what a group is.
\begin{definition}[Group]
    Let $G$ be a nonempty set equipped with a binary operator $\cdot:G\times G\to G$. This is a group if the following group axioms are satsfied
    \begin{enumerate}
        \item Associativity: For all $a,b,c\in G$, we have $(a\cdot b)\cdot c=a\cdot(b\cdot c)$ 
        \item Identity element: There is an element $e\in G$ such that for all $g\in G$ we have $e\cdot g=g\cdot e=g$
        \item Inverse element: For all $g\in G$, there is an inverse $g^{-1}\in G$ such that $g\cdot g^{-1}=g^{-1}\cdot g=e$ for identity $e$.
    \end{enumerate}
\end{definition}
Examples of groups include the group of permutations, the group rotation matrices with matrix multiplication as the group operation, and the group of integers under addition. One very important group is the group of automorphisms on a vector space. This group is denoted $GL(V)$ and we can think of it as the group of invertible matrices. 

We care about using groups to describe symmetries. The group elements abstractly represent the symmetry operations and the effect of these operations is what we call a group action.
\begin{definition}[Group action]
    Let $G$ be a group and $\Omega$ a set. A group action is a function $\alpha:G\times \Omega\to \Omega$ such that $\alpha(e,x)=x$ and $\alpha(g,\alpha(h,x))=\alpha(gh,x)$ for all $g,h\in G$ and $x\in\Omega$.
\end{definition}

One may want to relate two groups to each other. In particular, we would want a mapping which preserves the group structure. Such a mapping is a homomorphism.
\begin{definition}[Group homomorphism and isomorphism]
    Let $G$ and $H$ be groups. A group homomorphism is a function $f:G\to H$ such that $f(u\cdot v)=f(u)\cdot f(v)$ for all $u,v\in G$. A group homomorphism is an isomorphism if $f$ is a bijection.
\end{definition}
Usually, we consider data which lives in some vector space. Further, linear actions on vector spaces are represented by matrices. Hence, it is particular useful to relate arbitrary groups to groups consisting of matrices. Such a homomorphism together with the vector space the matrices act on is a group representation.
\begin{definition}[Group representation]
    Let $G$ be a group and $V$ a vector space over a field $F$. A group representation is a homomorphism $\rho:G\to GL(V)$ taking elements of $G$ to automorphisms of $V$.
\end{definition}

Given any representation, there are often orthogonal subspaces which do not interact with each other. If this is the case, we can break our representation down into smaller pieces by restricting to these subspaces. It is useful to consider the representations which cannot be broken down. These are known as the irreducible representations (irreps) and often form the building blocks of more complex representations.
\begin{definition}[Irreducible representation]
    Let $G$ be a group, $V$ a vector space, and $\rho:G\to GL(V)$ a representation. A representation is irreducible if there is no nontrivial proper subspace $W\subset V$ such that $\rho|_{W}$ is a representation of $G$ over space $W$.
\end{definition}
There has been considerable work on characterizing the irreps of various groups and many equivariant neural network designs use this knowledge.

Another special type of group representation is a permutation representation. It turns out only these types of representations are compatible with arbitrary pointwise nonlinearities \citep{pacinicharacterization}.
\begin{definition}[Permutation representation]
    Let $G$ be a group. Any group homomorphism $\pi:G\to\Sym(n)$ is an abstract permutation representation. Let $\rho:\Sym(n)\to\GL(V_n)$ be the representation mapping to corresponding permutation matrices. Then $\rho\circ\pi:G\to\GL(V_n)$ is a permutation representation. We will abbreviate permutation representation as p-rep.
\end{definition}

The intuition is that for a p-rep, there is always some choice of basis $b_i$ such that $\rho(g)b_i=b_{\pi(i)}$. In other words, group actions just permute the basis. The trivial representation is an example since there is only one basis and it remains invariant under group action. The regular representation commonly used in group convolutions is another example since we can associate each basis vector to a group element \cite{cohen2016group}.

Just like how irreps are building blocks of arbitrary representations, we are interested in the building blocks of p-reps. Such p-reps are the transitive p-reps. For such p-reps, each pair of basis vectors can be transformed into each other through a group action. 
\begin{definition}[Transitive permutation representation]
    Let $G$ be a group and $\rho:G\to\GL(V_n)$ be a permutation representation. We say the permutation representation is transitive if there is a basis $b_1,\ldots,b_n$ of $V_n$ such that the action of $G$ on the set $\{b_1,\ldots,b_n\}$ is transitive.
\end{definition}
It turns out transitive p-reps can also be characterized and it is much simpler than for irreps. To do so, we must introduce some additional concepts.

First, we introduce subgroups. A natural question is whether a subset of group elements themselves form a group under the same group operation. Such a subset is a called a subgroup.
\begin{definition}[Subgroup]
    Let $G$ be a group and $S\subseteq G$. If $S$ together with the group operation of $G$ $\cdot$ satisfy the group axioms, then $S$ is a subgroup of $G$ which we denote as $S\leq G$.
\end{definition}

One particular feature of a subgroup is that we can use them to decompose our group into disjoint chunks called cosets.
\begin{definition}[Cosets]
    Let $G$ be a group and $S$ a subgroup. The left cosets are sets obtained by multiplying $S$ with some fixed element of $G$ on the left. That is, the left cosets are for all $g\in G$
    \[gS=\{gs:s\in S\}.\]
    We denote the set of left cosets as $G/S$. The right cosets are defined similarly except we multiply with a fixed element of $G$ on the right. That is, the right cosets are for all $g\in G$
    \[Sg=\{sg:s\in S\}.\]
    We denote the set of right cosets as $G\backslash S$.
\end{definition}

Next, it is useful to define what we mean by symmetry of an object. These are all group elements which leave the object unchanged and is called the stabilizer.
\begin{definition}[Stabilizer]
    Let $G$ be a group, $\Omega$ some set with an action of $G$ defined on it, and $u\in\Omega$. The stabilizer of $u$ is all elements of $G$ which leave $u$ invariant. That is
    \[\Stab_G(u)=\{g:gu=u,g\in G\}.\]
\end{definition}
One can check that the stabilizer is indeed a subgroup. Closely related to the stabilizer is the orbit. This is all the values we get when we act with our group on some object.
\begin{definition}[Orbit]
    Let $G$ be a group, $\Omega$ some set with an action of $G$ defined on it, and $u\in\Omega$. The orbit of $u$ is the set of all values obtained when we act with all elements of $G$ on it. That is,
    \[\Orb_G(u)=\{gu:g\in G\}=Gu.\]
\end{definition}
It turns out one can show that the stabilizer of elements in the orbit are related. This relation turns out to be conjugation which we define below.

\begin{definition}[Conjugate subgroups]
    Let $G$ be a group and $S$ and $S'$ be subgroups. We say $S$ and $S'$ are conjugate if there is some $g\in G$ such that
    \[gSg^{-1}=S.\]
\end{definition}
Conjugacy of subgroups defines an equivalence relation.
\begin{definition}[Conjugate classes of subgroups]
    Let $G$ be a group and $S$ a subgroup. Denote the conjugate subgroups of $S$ as the set
    \[\Cl_G(S)=\{gSg^{-1}|g\in G\}.\]
    The conjugate classes of subgroups are defined as the different sets of conjugate subgroups.
\end{definition}

\begin{remark}
    The conjugate classes of subgroups for finite subgroups of $O(3)$ correspond to the point groups. This is the reason why certain groups such as $D_3$ and $C_{3v}$ are named differently despite being isomorphic. No $D_3$ subgroup can be transformed into a $C_{3v}$ subgroup via conjugation by some element of $O(3)$.
\end{remark}

Now we can characterize the transitive permutation representations.
\begin{manuallemma}{\ref{lemma:canonical_bijection}}
    Let $G$ be a group. There is a canonical bijection between the transitive permutation representations of $G$ and the classes of conjugate subgroups.
\end{manuallemma}
\begin{proof}
    First, given any conjugate class of subgroups $C$, we can always pick a subgroup $S\in C$. Then consider the cosets $G/S$ and pick coset representatives $\{g_1,\ldots,g_n\}$. We can construct a basis $b_1,\ldots,b_n$ and define the action of $g\in G$ on $b_i$ as $gb_i=b_j$ where $j$ is such that $g_j=gg_iS$. Hence, we can map each conjugate class of subgroups to a transitive permutation representation.

    Let $\rho:G\to\GL(V_n)$ be a transitive permutation representation. Suppose $b_1,\ldots,b_n$ is a basis of $V_n$ where $\rho$ consists of permutation matrices. Then consider the mapping $\rho$ to $\Cl_G(\Stab_G(b_1))$. Note it is easy to check that this inverts our previous construction of a t-rep from a conjugate class.

    Now consider some different transitive permutation representation $\rho':G\to\GL(V_n)$ where $b'_1,\ldots,b'_n$ is a basis where $\rho'$ consists of permutation matrices. Further, suppose $\Cl_G(\Stab_G(b'_1))=\Cl_G(\Stab_G(b_1))$.

    Then there must be some $g\in G$ such that $g\Stab_G(b_1)g^{-1}=\Stab_G(b'_1)$. Then define a mapping given by $\pi(g'b'_1)=g'gb_1$ for all $g'\in G$. Note that $\Stab_G(gb_1)=g\Stab_G(b_1)g^{-1}=\Stab_G(b'_1)$ which means this map is consistent. Note we can express $\pi$ as a change of basis matrix $B$. But then
    \begin{align*}
        B^{-1}\rho(g'')B(g'b_1)=B\rho(g'')g'gb_1=B^{-1}(g''g')gb_1=g''g'b'_1=\rho'(g'')g'b'_1.
    \end{align*}
    Hence, $\rho$ and $\rho'$ are equivalent representations.

    Lastly, we state Schur's lemma which is a very useful tool for characterizing equivariant linear maps.
    \begin{theorem}[Schur's Lemma]
        Let $G$ be a group and $V,W$ be 2 vector spaces equipped with irreps $\rho_V,\rho_W$.
        \begin{enumerate}
            \item If $\rho_V,\rho_W$ are not isomorphic, then there are no nonzero $G$-equivariant maps $V\to W$
            \item If $\rho_V,\rho_W$ are finite dimensional over an algebraically closed field and $\rho_V=\rho_W$, then the only notrivial $G$-equivariant linear maps $V\to W$ are scalar multiples of the identity.
        \end{enumerate}
    \end{theorem}

    \subsection{Infinite permutation representations}
    \label{sec:infinite_p_reps}
    Permutation representations are typically defined in the finite case. Here we show how this definition can be extended for infinite dimensional representations.

    First, recall that the abstract permutation representation was defined as a homomorphism $\pi:G\to\Sym(n)$. We can view $\Sym(n)$ as $\Aut(X)$ for some set of cardinality $|X|=n$. We can instead replace the finite $X$ with some arbitrary (possibly infinite set).

    \begin{definition}
        Let $X$ be some set. Define a homomorphism $\pi:G\to\Aut(X)$ as a generalized abstract permutation representation.
    \end{definition}

    We can then view any function $f:X\to \mathbb{F}$ where $\mathbb{F}$ is some field as an infinite dimensional vector space. In particular, we can consider some $\mathbb{F}^X$ with some canonical basis $c_x$ so that any $v\in\mathbb{F}^X$ can be written as 
    \[v=\sum_{x\in X}f(x)c_x.\]
    Using the generalized abstract p-rep $\pi$, we can define an action on this space
    \begin{definition}
        Let $G$ be a group, $X$ be some space, $\mathbb{F}$ be a field, and $\pi:G\to X$ a generalized abstract p-rep. Then we can define a map $\rho:G\to\GL(\mathbb{F}^X)$ by
        \[\rho(g)v=\rho(g)\sum_{x\in X}f(x)c_x=\sum_{x\in X}f(x)c_{\pi(g)x}=\sum_{x\in X}f(\pi^{-1}(g)x)c_{x}\]
        where $v=\sum_{x\in X}f(x)c_x\in\mathbb{F}^X$ is any vector. This is what we call a generalized permutation representation.
        \label{def:gen_p_rep}
    \end{definition}

    An example of a generalized p-rep is $S^2$ signals. If we embed $S^2$ as a unit sphere in $\mathbb{R}^3$, then the standard rotation action $R$ on $\mathbb{R}^3$ gives a natural generalized abstract p-rep. In particular, we can simply define $\pi:SO(3)\to \Aut(S^2)$ by 
    \[\pi(R)\mathbf{x}=R\mathbf{x}.\]
    We simply just rotate the sphere.

    Hence by Definition~\ref{def:gen_p_rep}, for any spherical signal $f:S^2\to\mathbb{R}$, the action $\rho(g)[f](x)=f(R^{-1}x)$ defines a generalized p-rep on the infinite dimensional space of spherical signals.
\end{proof}
\section{Proofs}
\label{sec:proofs}
\begin{manuallemma}{\ref{lemma:hidden_symm_crit_pts}}[Hidden symmetry induced critical points]
    Let $\mathcal{F}_{\tilde{\Theta}}\subset\mathcal{F}_{\Theta}$ be a class of constrained networks. Let $L:\Theta\to\mathbb{R}$ be a loss associated to these networks. Let $H$ be a hidden symmetry with unitary action and let $\Theta_H=\{\theta\in\Theta:h\theta=\theta\ \forall h\in H\}$ and $\tilde{\Theta}_H=\Theta_H\cap\tilde{\Theta}$.

    If $(\Theta_H^\perp\cap\tilde{\Theta})+(\Theta_H\cap\tilde{\Theta})=\tilde{\Theta}$, then critical points of $L|_{\tilde{\Theta}_H}$ are also critical points of $L|_{\tilde{\Theta}}$.
\end{manuallemma}

\begin{proof}
    
    Suppose $\theta^*$ is a critical point of $L_{\tilde{\Theta}_H}$. We first note that there is a natural action on the tangent space which is equivariant so for all $\theta\in\Theta$ and $h\in H$,
    \[\nabla L(h\theta)=h\nabla L(\theta).\]
    Next, for any $\theta\in\Theta_H$, we have $h\theta=\theta$ by definition so
    \begin{align*}
        \nabla L(\theta)= & \nabla L(h\theta)\\
        = & h\nabla L(\theta)
    \end{align*}
    By definition of $\Theta_H$, this means we must have $\nabla L(\theta)\in\Theta_H$ for any $\theta\in\Theta_H$.

    In particular, for $\theta^*$ we have $\nabla L(\theta^*)\in\Theta_H$. For any direction $t\in\tilde{\Theta}$, we can uniquely decompose
    \[t=t_H+t_H^\perp\]
    for $t_H\in\Theta_H$ and $t_H^\perp\in\Theta_H^\perp$.
    But since $\tilde{\Theta}=(\Theta_H^\perp\cap\tilde{\Theta})+(\Theta_H\cap\tilde{\Theta})$, we must have $t_H\in(\Theta_H\cap\tilde{\Theta})$ and $t_H^\perp\in(\Theta_H^\perp\cap\tilde{\Theta})$.

    Therefore,
    \begin{align*}
        t\cdot\nabla L(\theta^*)= & (t_H+T_H^\perp)\nabla L(\theta^*)\\
        = & t_H\nabla L(\theta^*)\\
        = & 0
    \end{align*}
    since we showed $\nabla L(\theta^*)\in\Theta_H$ and we assumed $\theta^*$ is a critical point of $\tilde{\Theta}_H$.
\end{proof}

\cite{evans2022partial,spivak2018calculus}
\begin{lemma}[Existence of minima in functions on balls]
    Let $\bar{B}^n$ be a closed $n$-dimensional ball. Suppose $f:\bar{B}^n\to\R$ is a $C^1$ smooth function. Suppose for any point $x$ on the boundary $\partial \bar{B}^n$ we have $\hat{n}(x)\cdot\nabla f(x)>0$ where $\hat{n}$ is the normal vector out of the boundary. Then $f$ has an absolute minima in the interior $B^n$.
    \label{lemma:minima_ball_interior}
\end{lemma}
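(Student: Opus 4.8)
The plan is to combine compactness (to produce a minimizer) with the transversality hypothesis on $\partial\bar B^n$ (to push the minimizer off the boundary). First I would observe that $\bar B^n$ is compact and $f$ is continuous, so by the extreme value theorem $f$ attains its absolute minimum at some point $x_0\in\bar B^n$. It then remains only to show that $x_0$ cannot lie on $\partial\bar B^n$, which forces $x_0\in B^n$ and proves the claim.

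To rule out a boundary minimizer, I would argue by contradiction: suppose $x_0\in\partial\bar B^n$, and consider the inward half-ray $\gamma(t)=x_0-t\,\hat n(x_0)$ for $t\ge 0$, where $\hat n(x_0)$ is the outward unit normal at $x_0$. Since $\bar B^n$ is convex and $\hat n(x_0)$ points strictly outward, there is some $\varepsilon>0$ with $\gamma(t)\in B^n$ for all $t\in(0,\varepsilon]$ (for the standard unit ball this is just $\gamma(t)=(1-t)x_0$, of norm $1-t<1$). Because $f\in C^1$, the composition $t\mapsto f(\gamma(t))$ is differentiable and the chain rule gives
\[\left.\frac{d}{dt}\right|_{t=0^+}f(\gamma(t))=\nabla f(x_0)\cdot\gamma'(0)=-\,\hat n(x_0)\cdot\nabla f(x_0)<0\]
by hypothesis. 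Hence $f(\gamma(t))<f(x_0)$ for all sufficiently small $t>0$, contradicting the assumption that $x_0$ is an absolute minimizer of $f$ on $\bar B^n$. Therefore $x_0\in B^n$, so $f$ has an absolute minimum in the interior.

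I do not expect a genuine obstacle here; the statement is essentially a first-order optimality observation. The only points that merit a line of care are (i) verifying that the inward ray actually enters the open ball for small positive $t$, which follows immediately from convexity of the ball and the definition of the outward normal, and (ii) noting that the one-sided derivative computation is legitimate under the hypothesis $f\in C^1$, so that the chain rule applies along the half-open segment even though we only move in one direction from the boundary point.
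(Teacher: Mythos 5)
Your proof is correct and follows essentially the same route as the paper's: extreme value theorem on the compact ball, then a first-order argument showing the strictly positive outward normal derivative forces $f$ to decrease along the inward normal, so no boundary point can be the minimizer. The only cosmetic difference is that you argue pointwise by contradiction at the minimizer, whereas the paper bounds $\hat n(x)\cdot\nabla f(x)$ uniformly below over the whole boundary before running the same Taylor-expansion step; your version is slightly more economical but the idea is identical.
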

\begin{proof}
    First, because $f$ is a continuous function over a compact space, it has an absolute minima by the extreme value theorem.
    
    Next, we can define a continuous function $h:\partial \bar{B}^n\to\R$ by $h(x)=\hat{n}(x)\cdot\nabla f(x)$. Since $\partial\bar{B}^n$ is compact, the extreme value theorem tells us there is some $y\in\partial\bar{B}$ such that $h(y)=\inf_{x\in\partial\bar{B}}h(x)$. Let $\delta=h(y)>0$.
    
    Consider any $x^*\in\partial\bar{B}$. There exists some $\epsilon$ such that for all $|x-x^*|<\epsilon$ we have $|f(x)-f(x^*)-(x-x^*)\cdot\nabla f(x^*)|<\frac{\epsilon\delta}{2}$. In particular, suppose we choose $x=x^*-\frac{\epsilon}{2}\hat{n}(x^*)$. Then
    \begin{align*}
        f(x)< & f(x^*)+(x-x^*)\cdot\nabla f(x^*)+\frac{\epsilon\delta}{2}\\
        & = f(x^*)-\frac{\epsilon}{2}\hat{n}(x^*)\cdot\nabla f(x^*)+\frac{\epsilon\delta}{2}\\
        & \leq f(x^*)-\frac{\epsilon}{2}h(y)+\frac{\epsilon\delta}{2}\\
        & = f(x^*).
    \end{align*}
    Therefore $x^*$ cannot be the global minima.

    Hence we conclude the global minima of $f$ lies in the interior $B^n$.
\end{proof}

\begin{lemma}[Minima on boundary]
    Let $U$ be an open subset of $\R^n$. Suppose $f:\bar{U}\to\R$ is a $C^1$ smooth function. Consider a point $x\in\partial U$ on the boundary which is locally $C^1$ smooth so that we can define an outward pointing unit normal $\hat{n}(x)$. Then if $\hat{n}(x)\nabla f(x)>0$ then $x$ cannot be the global minima of $f$.
    \label{lemma:minima_interior}
\end{lemma}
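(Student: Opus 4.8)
The plan is to exhibit an explicit descent direction at $x$ that points into $U$, and then conclude via a first-order Taylor expansion — exactly the boundary step already used in the proof of Lemma~\ref{lemma:minima_ball_interior}, now carried out locally rather than on a global ball.

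First I would use the local $C^1$ regularity of $\partial U$ at $x$. In suitable coordinates a neighborhood of $x$ in $\bar U$ is the region lying (weakly) on one side of the graph of a $C^1$ function, and $\hat n(x)$ is precisely the outward unit normal determined by that graph. From this it follows that the inward segment $x_t := x - t\,\hat n(x)$ lies in $U$ for all sufficiently small $t>0$: the defining function of $\partial U$ has first-order expansion along $-\hat n(x)$ with strictly negative slope at $x$, so a short step in that direction enters the interior. In fact all the argument genuinely needs is the existence of \emph{some} sequence $x_k \to x$ with $x_k \in U$ and $(x_k-x)/\lvert x_k-x\rvert \to -\hat n(x)$; if one prefers, the lemma may be stated with that weaker hypothesis and this geometric step disappears.

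Next, since $f$ is $C^1$ on $\bar U$, Taylor's theorem gives, as $t \to 0^+$,
\[
f(x_t) = f(x) - t\,\hat n(x)\cdot\nabla f(x) + o(t).
\]
Write $\delta := \hat n(x)\cdot\nabla f(x) > 0$. Choosing $t$ small enough that both $x_t \in U$ and $\lvert o(t)\rvert < t\delta/2$, we obtain $f(x_t) < f(x) - t\delta/2 < f(x)$. Thus $f$ attains a strictly smaller value at the point $x_t \in U \subset \bar U$, so $x$ cannot be a global minimum of $f$, as claimed.

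The only real obstacle is the geometric claim that a short step from $x$ along $-\hat n(x)$ remains in $\bar U$; this is a standard consequence of the $C^1$-boundary assumption (flatten the boundary, or use the graph representation) and is the sole place where local smoothness is actually invoked. Everything else is the same one-line Taylor estimate that already appears in Lemma~\ref{lemma:minima_ball_interior}, and indeed the present statement is essentially the localized form of that boundary computation.
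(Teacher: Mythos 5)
Your argument is correct and follows the same route as the paper: step inward along $-\hat n(x)$ into $U$ (justified by the local $C^1$ graph representation of the boundary) and apply the first-order Taylor estimate to conclude $f(x_t)<f(x)$, so $x$ is not a global minimum. In fact the paper's own proof of this lemma is truncated mid-sentence and recycles notation from Lemma~\ref{lemma:minima_ball_interior}; your write-up supplies the missing geometric step (that a short step along $-\hat n(x)$ lands in $U$) cleanly.
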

\begin{proof}
    Because the boundary is $C^1$ smooth at $x$, there is some neighborhood $U_x$ where there is a $C^1$ smooth function $\gamma:U_x\to\R$ such that $y\in U_x\cap \bar{U}$ if and only if $\gamma(y)\geq$
    
    There exists some $\epsilon$ such that for all $u$ where $|u-x|<\epsilon$ we have $|f(u)-f(x)-(u-x)\cdot\nabla f(x)|<|u-x|\frac{\delta}{2}$. In particular, suppose we choose $u=x-\frac{\epsilon}{2}\hat{n}(x^*)$. Then
    \begin{align*}
        f(x)< & f(x^*)+(x-x^*)\cdot\nabla f(x^*)+\frac{\epsilon\delta}{2}\\
        & = f(x^*)-\frac{\epsilon}{2}\hat{n}(x^*)\cdot\nabla f(x^*)+\frac{\epsilon\delta}{2}\\
        & \leq f(x^*)-\frac{\epsilon}{2}h(y)+\frac{\epsilon\delta}{2}\\
        & = f(x^*).
    \end{align*}
    Therefore $x^*$ cannot be the global minima.

    Hence we conclude the global minima of $f$ lies in the interior $B^n$.
\end{proof}
\begin{manualtheorem}{\ref{thm:hidden_symm_minima}}[Hidden symmetry induced minima]
    Let $\mathcal{F}_{\tilde{\Theta}}\subset\mathcal{F}_{\Theta}$ be a class of constrained networks. Let $L:\Theta\to\mathbb{R}$ be a loss associated to these networks. Let $H$ be a hidden symmetry with unitary action and let $\Theta_H=\{\theta\in\Theta:h\theta=\theta\ \forall h\in H\}$ and $\tilde{\Theta}_H=\Theta_H\cap\tilde{\Theta}$.
    
    Suppose $(\Theta_H^\perp\cap\tilde{\Theta})+(\Theta_H\cap\tilde{\Theta})=\tilde{\Theta}$ and $\dim(\tilde{\Theta}_H)=\dim(\tilde{\Theta})-1$. Suppose there is a minima $\theta_1$ of $L|_{\tilde{\Theta}}$ such that $\theta_1\notin\tilde{\Theta}_H$. Further suppose that there exists some $C$ such that for any direction $\hat{r}\in\tilde{\Theta}$ we have $\hat{r}\cdot\nabla L(C\hat{r})>0$. Then if Hessians of all critical points are nondegenerate, there must exist a distinct minima $\theta_2\neq\theta_1$ of $L|_{\tilde{\Theta}}$.
\end{manualtheorem}
    
\begin{proof}
    First, consider the closed ball $\bar{B}_C(0)=\{\theta\in\tilde{\Theta}_H:|\theta|\leq C\}$. By assumption, on the boundary we have $\hat{r}\cdot\nabla L(C\hat{r})>0$ so Lemma~\ref{lemma:minima_interior} tells us $L$ has a minima $\theta^*$ in the interior $B_C(0)$. So $\theta^*$ must be a critical point and in fact must be a local minima of $L|_{\tilde{\Theta}_H}$. But by Lemma~\ref{lemma:hidden_symm_crit_pts}, $\theta^*$ is also a critical point of $L|_{\tilde{\Theta}}$.

    If $\theta^*$ is a local minima of $L|_{\tilde{\Theta}}$, then we have an additional minima distinct from $\theta_1$ since $\theta_1\notin\tilde{\Theta}_H$ but $\theta^*\in\tilde{\Theta}$.
    
    So suppose $\theta^*$ is not a local minima of $L|_{\tilde{\Theta}}$. Let $\hat{n}\in\tilde{\Theta}$ be a unit vector normal to $\tilde{\Theta}_H$. Then we can consider two hemispheres given by
    \[\bar{D}_1=\{x\in\tilde{\Theta}:x\cdot\hat{n}\leq0,\ |x|\leq C\}\]
    \[\bar{D}_2=\{x\in\tilde{\Theta}:x\cdot\hat{n}\geq0,\ |x|\leq C\}\]
    We claim neither of these sets can have minima on their boundaries. Without loss of generality consider $D_1$. Its boundary consists of $S_1=\{x\in\tilde{\Theta}:x\cdot\hat{n}<0,\ |x|=C\}$ and $\bar{B}_C(0)$. Along $S_1$, by assumption $\hat{n}(x)\cdot\nabla L(x)>0$ so Lemma~\ref{lemma:minima_interior} tells us the global minima of $\bar{D}_1$ cannot be on $S_1$. On $\bar{B}_C(0)$, we know the minimum value is $L(\theta^*)$. However, we assumed $\theta^*$ is not a minima so it must be a saddle. Since we assumed nondegenerate Hessians at critical points, there must be some unit vector $\hat{d}$ such that $\hat{d}^TH(\theta^*)\hat{d}<0$. Note that since $\theta^*$ is a minima of $L_{\tilde{\Theta}_H}$, $|\hat{d}\cdot\hat{n}|>0$. Without loss of generality suppose $\hat{d}\cdot\hat{n}<0$.

    Then for any $\delta$ there exists some $\epsilon$ such that for all $\theta\in\tilde{\Theta}$, if $|\theta-\theta^*|<\epsilon$ then $|L(\theta)-L(\theta^*)-\frac{1}{2}(\theta-\theta^*)^T H(\theta^*)(\theta-\theta^*)|<|\theta-\theta^*|^2\delta$. Pick $\delta=\frac{1}{4}|\hat{d}^TH(\theta^*)\hat{d}|$ and $\theta=\theta^*+\epsilon\hat{d}/2$. Since $\hat{d}\cdot\hat{n}<0$, we see $\theta\cdot\hat{n}<0$ so $\theta\in \bar{D}_1$. But we see that
    \begin{align*}
        L(\theta)< & L(\theta^*)+\frac{1}{2}\frac{\epsilon\hat{d}^T}{2}H(\theta^*)\frac{\epsilon\hat{d}}{2}+\frac{\epsilon^2}{4}\delta\\
        & = L(\theta^*)+\frac{1}{2}\epsilon^2(-\delta)+\frac{\epsilon^2}{4}\delta\\
        & = L(\theta^*)-\frac{1}{4}\epsilon^2\delta\\
        < & L(\theta^*).
    \end{align*}
    Therefore there is a point in the interior of $\bar{D}_1$ with value smaller than $L(\theta^*)$. Hence the minima also cannot be on $\bar{B}_C(0)$. Hence $L|_{\bar{D}_1}$ must have its global minima in the interior. Similarly $L|_{\bar{D}_2}$ must as well so there are 2 distinct local minima.
\end{proof}

\begin{manualtheorem}{\ref{thm:transitive_perm_reduce}}
    Let a $m$-neuron point be a transitive permutation representation. Suppose the weights satisfy the corresponding equivariance constraints. If this $m$-neuron point is reducible, then either we can eliminate these neurons or we can replace the transitive permutation representation with a $n$-neuron transitive permutation representation where $n<m$ and $n|m$.
\end{manualtheorem}

\begin{proof}
    Let $W$ be the input weights with dimensions $m\times \din$ and $U$ be the output weights with dimensions $\dout\times m$. Since the hidden neurons transform as a permutation representation, for any $g\in G$ denote by $\pi_g(i)$ as the node which $g$ transforms node $i$ into.
    
    There are 2 cases for reducibility. Either we have a neuron with $0$ output weight or a pair of neurons which share input weights.

    The first case is that there is some $k$ where $U_{ik}=0$ for all $i\in[\dout]$. In this case we find that by equivariance constraints for any $g\in G$ we have
    \[U_{i\pi_g(k)}=(U\rho(g))_{ik}=(\rho''(g)U)_{ik}=0.\]
    Therefore all output weights are 0 and we can eliminate this transitive block of neurons.
    
    The second case is that there exist distinct $k,k'$ such that $W_{ki}=W_{k'i}$ for all $i\in\{1,2,\ldots,\din\}$. Suppose $h\in G$ is such that $\pi_{h}(k')=k$. Then by equivariance constraints we must have
    \[(\rho(h)W)_{ki}=W_{\pi_{h^{-1}}(k)i}=W_{k'i}=(W\rho'(h))_{ki}=W_{kj}\rho'_{ji}(h)\]
    where we use Einstein summation notation. But since $W_{k'i}=W_{ki}$ we have
    \begin{equation}
        W_{ki}=W_{kj}\rho'_{ji}(h) \label{eqn:W_switch_invar}
    \end{equation}
    for all such $h$. Further, let $S_k$ be the subgroup such that $(\rho(s)W)_{ki}=W_{ki}$ for all $s\in S_k$ and $i\in\{1,2,\ldots,\din\}$. Then by equivariance constraints we also have
    \begin{equation}
        (\rho(s)W)_{ki}=W_{ki}=W_{kj}\rho'_{ji}(s). \label{eqn:W_stab_invar}
    \end{equation}
    Now, consider a group $A$ generated by elements of $S_k$ and $h$. Then any $a\in A$ can be written as
    \[a=g_1g_2\ldots g_p\]
    for some finite $p$ where each $g_i$ is either in $S_k$ or is $h$. We then find that
    \begin{align*}
        W_{kj}\rho'_{ji}(a)= & W_{kj_1}\rho'_{j_1j_2}(g_1)\ldots\rho'_{j_pi}(g_p)\\
        = & \left(W_{kj_1}\rho'_{j_1j_2}(g_1)\right)\ldots\rho'_{j_pi}(g_p)\\
        = & W_{kj_2}\rho'_{j_2j_3}(g_2)\ldots\rho'_{j_pi}(g_p)\\
        = & \ldots\\
        = W_{ki}
    \end{align*}
    where we repeatedly apply \eqref{eqn:W_switch_invar} and \eqref{eqn:W_stab_invar}. However, by equivariance we must also have
    \[W_{ki}=W_{kj}\rho'_{ji}(a)=(\rho(a)W)_{ki}=W_{\pi_{a^{-1}}(k)i}\]
    for all $i\in\{1,2,\ldots,\din\}$ and any $a\in A$.


    Next, consider some coset $gA\in G/A$. We then find for any $b=ga\in gA$ we have
    \begin{align*}
        W_{\pi_b(k)i}= & (\rho(b^{-1})W)_{ki}=W_{kj}\rho'_{ji}(b^{-1})\\
        = & W_{kj}\rho'_{ji}(a^{-1}g^{-1})\\
        = & W_{kj_1}\rho'_{j_1j_2}(a^{-1})\rho'_{j_2i}(g^{-1})\\
        = & W_{kj_2}\rho'_{j_2i}(g^{-1}).
    \end{align*}
    Hence, we see that for any $g$ from the same coset, the incoming weights $W_{\pi_g(k)i}$ are the same. Since our permutation representation is transitive, all neurons are reached by appropriate choice of $g\in G$. Hence, we can reduce our $m$-neuron point to a $n=|G/A|$-neuron point.

    Finally, we must show that with this reduction we still satisfy equivariance constraints. To do so, let us define our new weights and new representation. First, pick some coset representatives $\{g_1=e,g_2,\ldots,g_n\}$. Next, we define new ingoing weights
    \[W'_{pq}=W_{\pi_{g_p}(k)q}.\]
    and outgoing weights
    \[U'_{qp}=\sum_{g\in g_pA}U_{q\pi_{g}(k)}.\]
    Let $\tilde{\rho}:G\to\GL(\R^{|G/A|})$ be permutation matrices defined as
    \[\tilde{\rho}_{ij}(g)=\begin{cases}
        1 & gg_j\in g_iA\\
        0 & \mathrm{otherwise}
    \end{cases}\]
    One can check that these matrices indeed form a permutation representation. Further, denote by $\tilde{\pi}_g:[|G/A|]\to[|G/A|]$ to be the corresponding permutation of indices for $\tilde{\rho}(g)$.

    We then check for any $g\in G$ that
    \begin{align*}
        (W'\rho'(g))_{pq} = & W'_{pi}\rho'_{iq}(g) = W_{\pi_{g_p}(k)i}\rho'_{iq}(g)\\
        = & \rho_{kj}(g_p^{-1})W_{ji}\rho'_{iq}(g) = \rho_{kj}(g_p^{-1})\rho_{ji}(g)W_{iq}\\
        = & \rho_{ki}(g_p^{-1}g)W_{iq} = W_{\pi_{g^{-1}g_p}(k)q}
    \end{align*}
    Now let $g_j$ be such that $g^{-1}g_p\in g_jA$. Then we see
    \begin{align*}
        (W'\rho'(g))_{pq} = & W_{\pi_{g_j}(k)q} = W'_{jq}\\
        = & W'_{\tilde{\pi}_{g^{-1}}(p)q}=\tilde{\rho}_{pi}(g)W'_{iq}\\
        = & (\tilde{\rho}(g)W')_{pq}.
    \end{align*}
    So $W'$ satisfies the equivariance constraints.

    For $U'$ we check that
    \begin{align*}
        (\rho''(g)U')_{qp}= & \rho''_{qi}(g)U'_{ip}\\
        = & \sum_{g'\in g_pA}\rho''_{qi}(g)U_{i\pi_{g'}(k)}\\
        = & \sum_{g'\in g_pA}U_{qi}\rho_{i\pi_{g'}(k)}(g)\\
        = & \sum_{g'\in g_pA}U_{qi}\rho_{ij}(g)\rho_{jk}(g')\\
        = & \sum_{g'\in g_pA}U_{qi}\rho_{ik}(gg')
    \end{align*}
    Let $g_j$ be such that $gg_p\in g_jA$. Then for any $g'\in g_pA$, we have $g'=g_pa$ for some $a\in A$ so $gg'=gg_pa=g_ja'a$ since we know $gg_p=g_ja'$ for some $a'$. So then $gg'\in g_jA$. Hence we see that
    \begin{align*}
        (\rho''(g)U')_{qp} = & \sum_{g''\in g_jA}U_{qi}\rho_{ik}(g'')\\
        = & U'_{qj} = U'_{q\tilde{\pi}_g(p)}\\
        = & U'_{qi}\tilde{\rho}_{ip}(g)\\
        = & (U'\tilde{\rho}(g))_{qp}.
    \end{align*}
    So the new output weights also satisfy equivariance constraints.
\end{proof}

\begin{manualtheorem}{\ref{thm:transitive_perms_compress}}
    Let a $m$-neuron point and a $m'$-neuron point be distinct transitive permutation representations each of which are irreducible. Suppose the $m+m'$-neuron point combining these neurons is reducible. Then they must both be the same permutation representation and we can replace the $m+m'$-neuron point with a $m$-neuron point.
\end{manualtheorem}

\begin{proof}
    Let use denote the incoming weights to the $m$-neuron and $m'$-neuron points as $W$ and $W'$ respectively. Suppose $k,k'$ are such that
    \[W_{ki}=W'_{k'i}\]
    for all $i\in[\din]$.

    First, we show that both the $m$ and $m'$ neuron points are the same permutation representation. Let
    \[S=\Stab_{G}(k)\qquad\qquad S'=\Stab_{G}(k').\]
    Suppose $S\neq S'$. Then either there is some $s\in S$ where $s\notin S'$ or there is some $s'\in S$ but $s'\notin S$. Without loss of generality suppose we have the former. Then by equivariance, we have
    \begin{align*}
        W_{kj}= & (\rho(s^{-1})W)_{kj} = (W\bar{\rho}(s^{-1}))_{kj}\\
        = & W_{ki}\bar{\rho}_{ij}(s^{-1}) = W'_{k'i}\bar{\rho}_{ij}(s^{-1})\\
        = & (W'\bar{\rho}(s^{-1}))_{k'j}=(\rho'(s^{-1})W')_{k'j}\\
        = & W'_{\pi'_s(k')j}.
    \end{align*}
    But since $s\notin S'$, $\pi'_s(k')\neq k'$. But this would mean $m'$-neuron point is reducible, a contradiction. Hence we must have $S=S'$. Because both neuron points are transitive, this implies they must be the same.

    To conclude, we note that equivariance constraints uniquely define the remaining rows of $W,W'$ which means there is a bijection between rows of $W,W'$ where all corresponding rows are equivalent. Hence, we can combine all these neurons can create a single $m$-neuron point instead.
\end{proof}
\section{Additional experiments}
\label{sec:additiona_experiments}

\subsection{Permutation symmetry}
Using the same setup as in Section~\ref{sec:experiments}, we tried different random seeds for initializing the teacher weights.
\subsubsection{Loss landscape}

\newcommand{\width}{0.2\textwidth}
In Figure~\ref{fig:more_loss}, we have the loss landscape as we vary the diagonal and off diagonal components of the $\pi_3\to\pi_3$ map for various random seeds. There seem to be two distinct minima in most of the plots. For Figures~\ref{fig:more_loss1}, \ref{fig:more_loss8}, \ref{fig:more_loss9}, the distinct minima is not very apparent. By directly checking the weights, we were able to confirm that the diagonal and off diagonal components of the teacher weights are very close which likely led to this phenomenon. In addition, for Figures \ref{fig:more_loss4} and \ref{fig:more_loss6}, the minima are out of range of our parameter sweep. Interestingly, we note that the global and spurious minima always appear to be roughly equidistant from the diagonal line corresponding to the hidden fixed point subspace.
\begin{figure}[!h]
    \centering
    \subfloat[\label{fig:more_loss0}]{\includegraphics[width=\width]{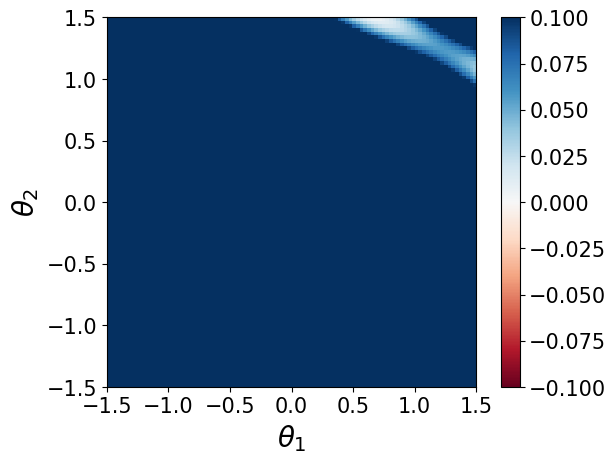}}
    \subfloat[\label{fig:more_loss1}]{\includegraphics[width=\width]{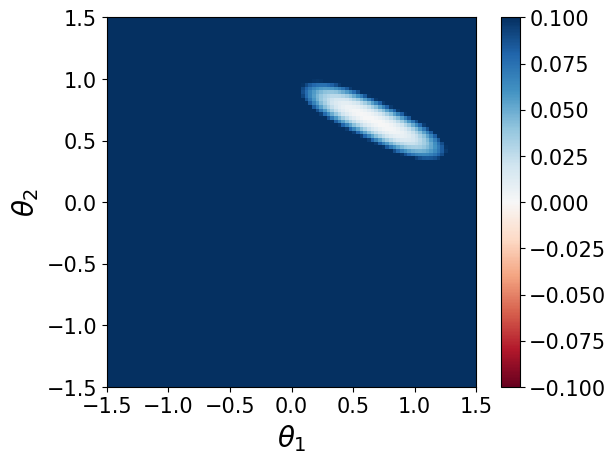}}
    \subfloat[\label{fig:more_loss2}]{\includegraphics[width=\width]{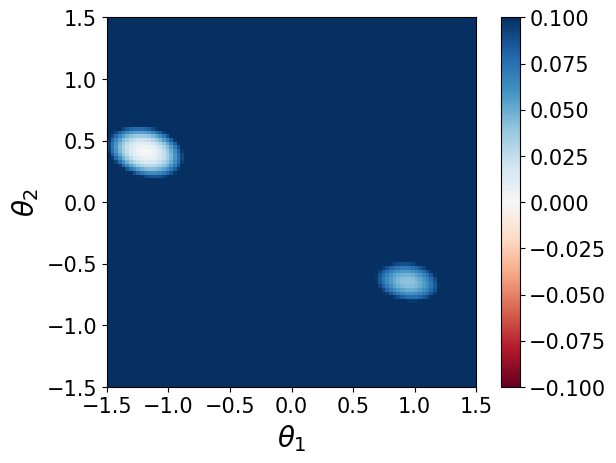}}
    \subfloat[\label{fig:more_loss3}]{\includegraphics[width=\width]{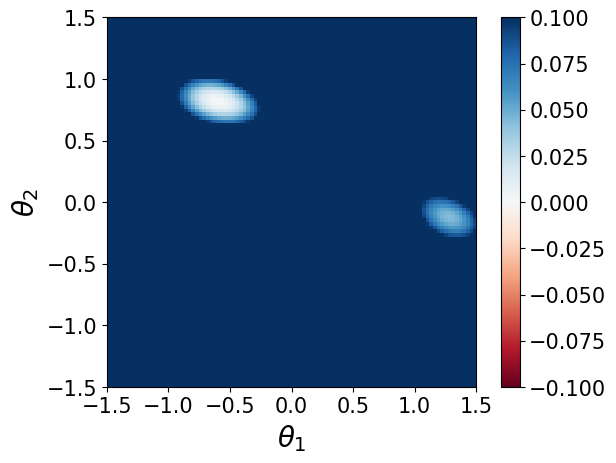}}
    \subfloat[\label{fig:more_loss4}]{\includegraphics[width=\width]{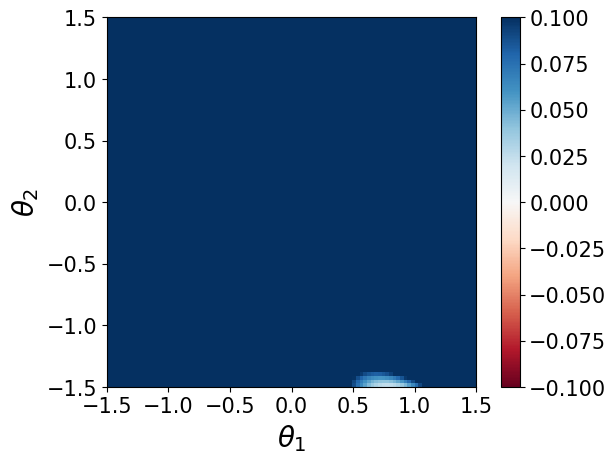}}\\
    \subfloat[\label{fig:more_loss5}]{\includegraphics[width=\width]{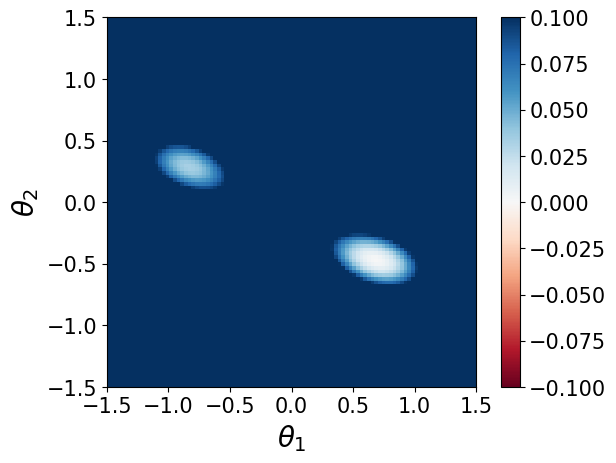}}
    \subfloat[\label{fig:more_loss6}]{\includegraphics[width=\width]{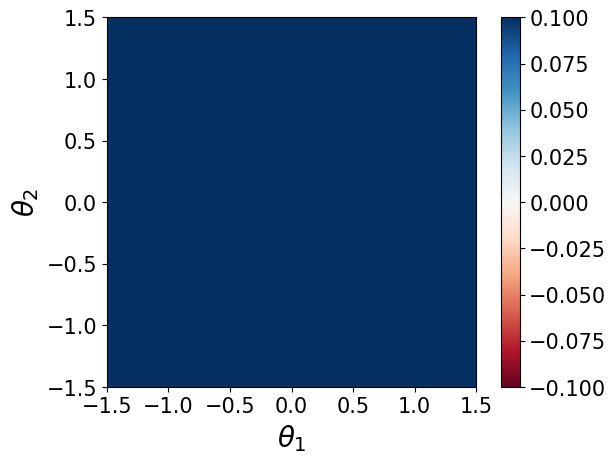}}
    \subfloat[\label{fig:more_loss7}]{\includegraphics[width=\width]{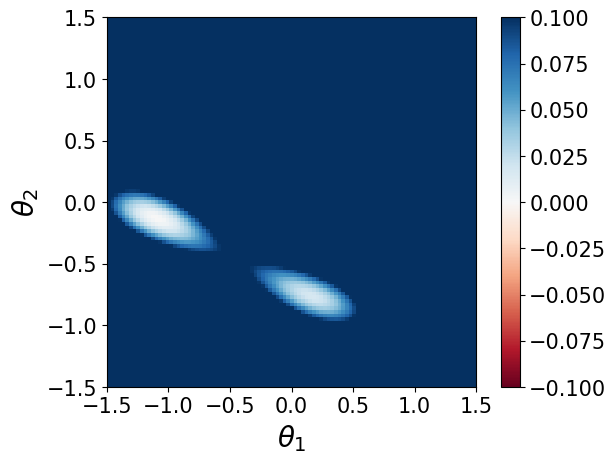}}
    \subfloat[\label{fig:more_loss8}]{\includegraphics[width=\width]{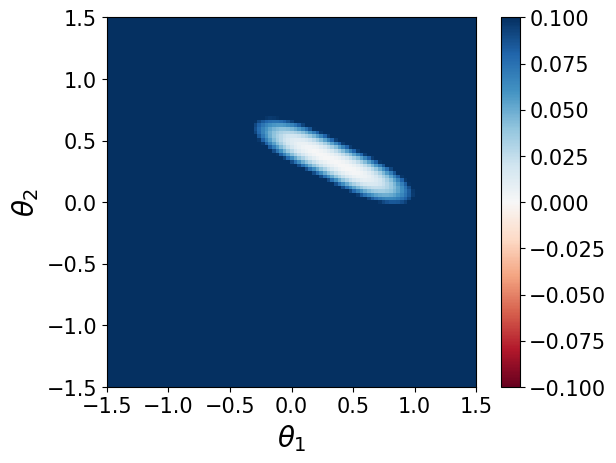}}
    \subfloat[\label{fig:more_loss9}]{\includegraphics[width=\width]{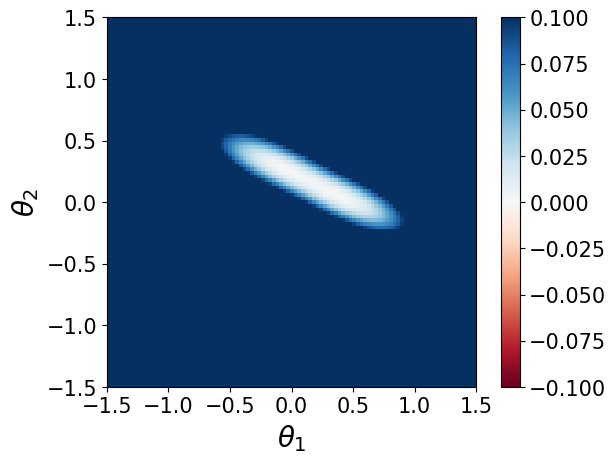}}
    \caption{Loss landscapes for various random seeds. Note there seems to be two distinct minima in almost all cases.}
    \label{fig:more_loss}
\end{figure}

\subsubsection{Constrained training}
Same as before, we initialize student networks to have the same weights as the teacher network except for the diagonal and off diagonal components of the $\pi_3\to\pi_3$ map. We then train for 100 steps with gradient descent and record the final loss. We do this both for a direct parameterization of diagonal and off diagonal values (Figure~\ref{fig:more_train_phase} and one with rescaled diagonal values (Figure~\ref{fig:more_normed_phase}). As before, we see clear boundaries in most of the cases and in Figure~\ref{fig:more_train_phase} we generally see a nonlinear boundary while in Figure~\ref{fig:more_normed_phase} we generally see a linear one which corresponds to the hidden fixed point space. However, for (b) and (i) there seems to be a different boundary which is much more prominent. It is unclear to us exactly where this boundary comes from. We suspect that there may be additional spurious minima not explained by our symmetry argument. Such minima are known to be common in these 2-layer ReLU setups \citep{safran2018spurious}.
\begin{figure}[!h]
    \centering
    \subfloat[]{\includegraphics[width=\width]{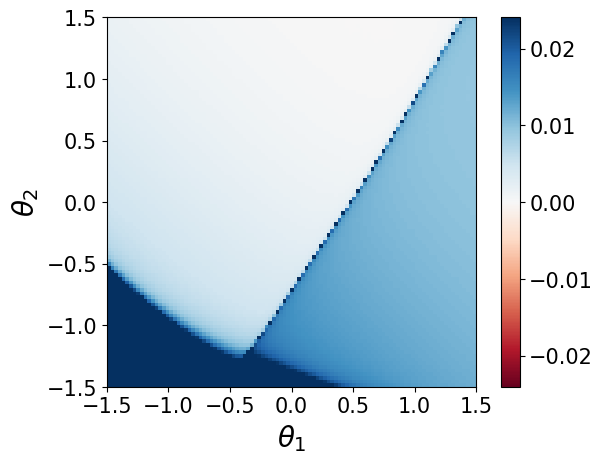}}
    \subfloat[]{\includegraphics[width=\width]{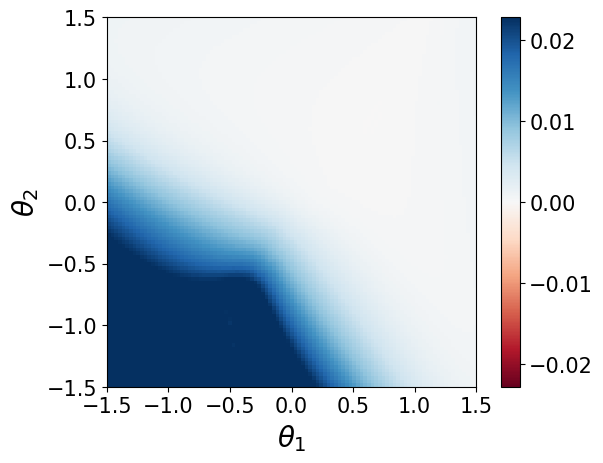}}
    \subfloat[]{\includegraphics[width=\width]{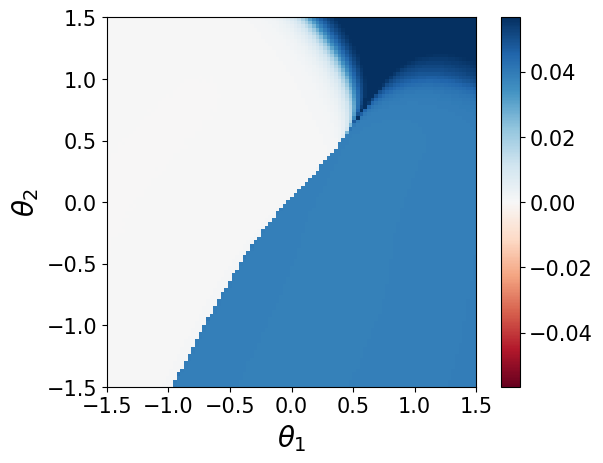}}
    \subfloat[]{\includegraphics[width=\width]{landscape/etwo_3_1_invar_3_steps_100_3.png}}
    \subfloat[]{\includegraphics[width=\width]{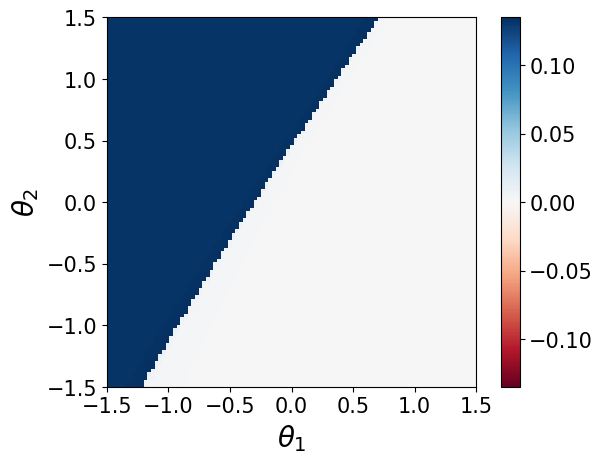}}\\
    \subfloat[]{\includegraphics[width=\width]{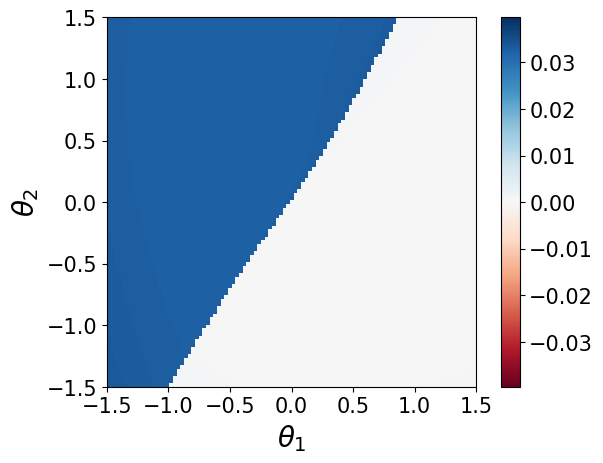}}
    \subfloat[]{\includegraphics[width=\width]{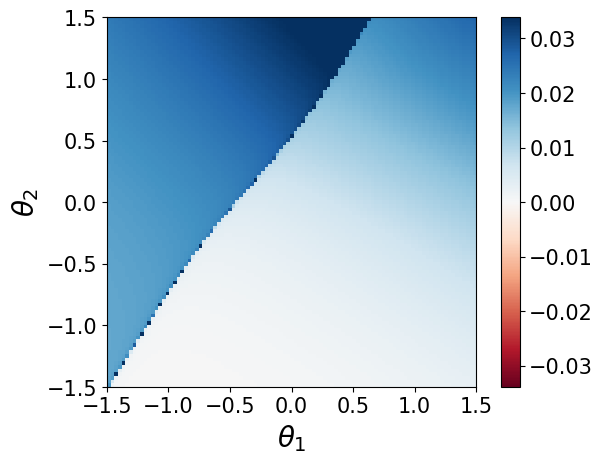}}
    \subfloat[]{\includegraphics[width=\width]{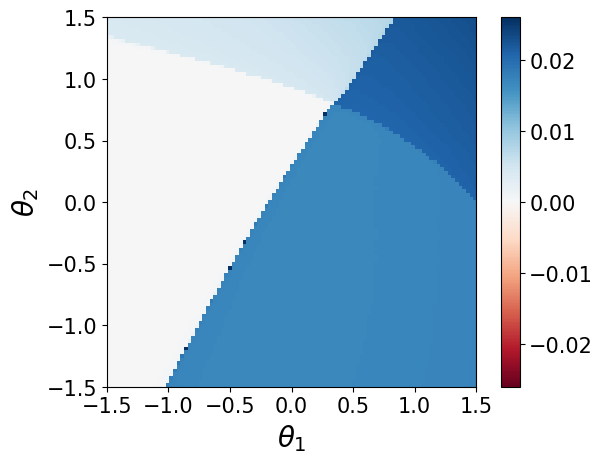}}
    \subfloat[]{\includegraphics[width=\width]{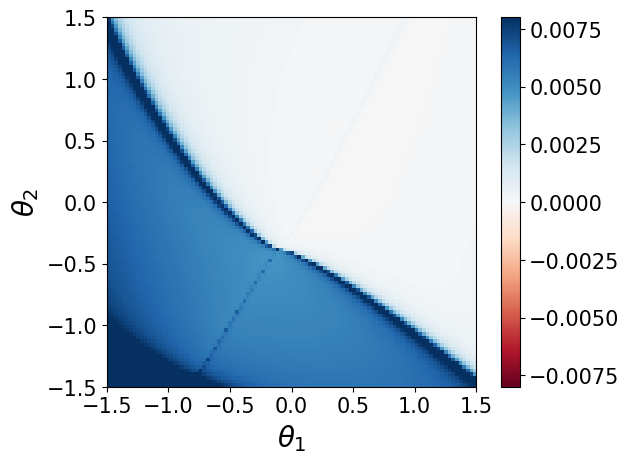}}
    \subfloat[]{\includegraphics[width=\width]{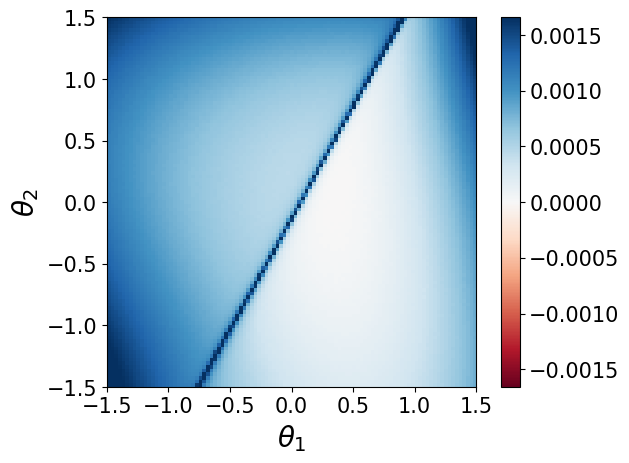}}
    \caption{Loss after training student network for 100 steps. We directly parameterize the $\pi_3\to\pi_3$ map as $\theta_1I+\theta_2(\mathbf{1}\mathbf{1}^T-I)$.}
    \label{fig:more_train_phase}
\end{figure}

\begin{figure}[!h]
    \centering
    \subfloat[]{\includegraphics[width=\width]{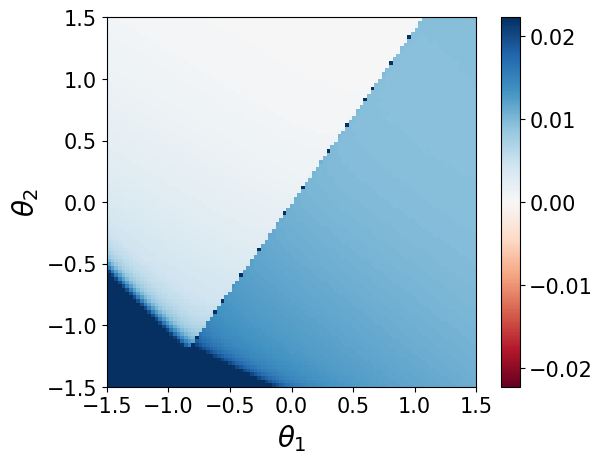}}
    \subfloat[]{\includegraphics[width=\width]{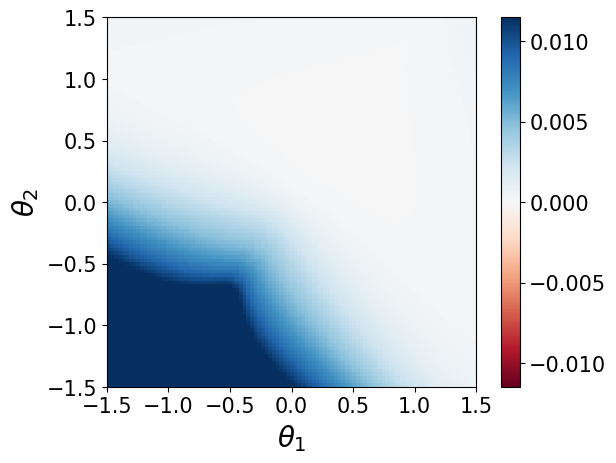}}
    \subfloat[]{\includegraphics[width=\width]{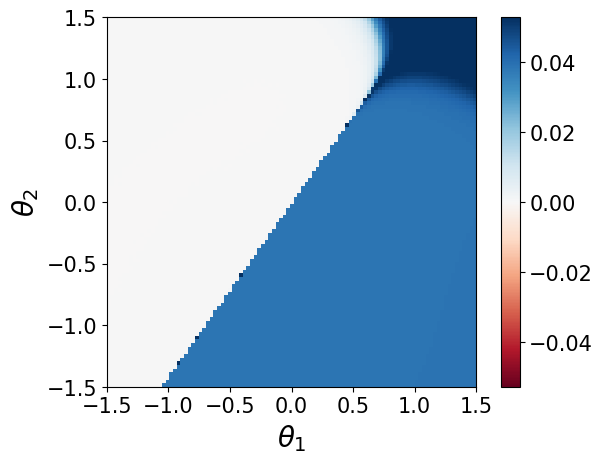}}
    \subfloat[]{\includegraphics[width=\width]{landscape/etwo_3_1_invar_3_steps_100_normed_3.png}}
    \subfloat[]{\includegraphics[width=\width]{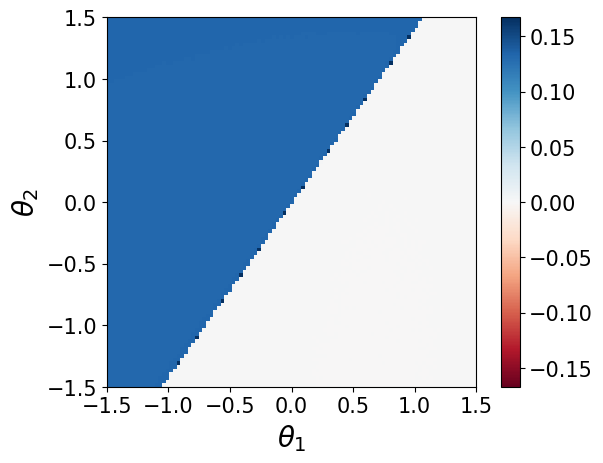}}\\
    \subfloat[]{\includegraphics[width=\width]{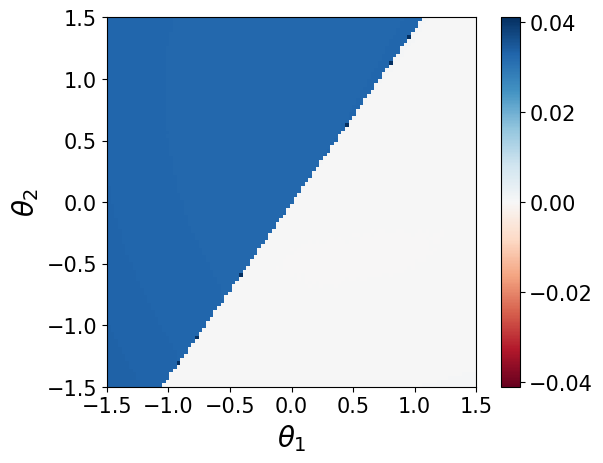}}
    \subfloat[]{\includegraphics[width=\width]{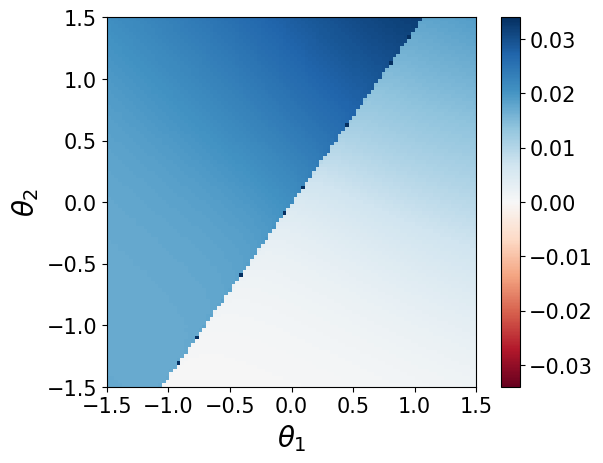}}
    \subfloat[]{\includegraphics[width=\width]{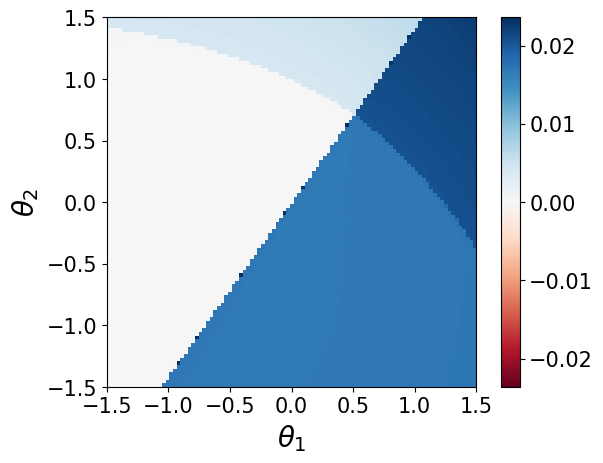}}
    \subfloat[]{\includegraphics[width=\width]{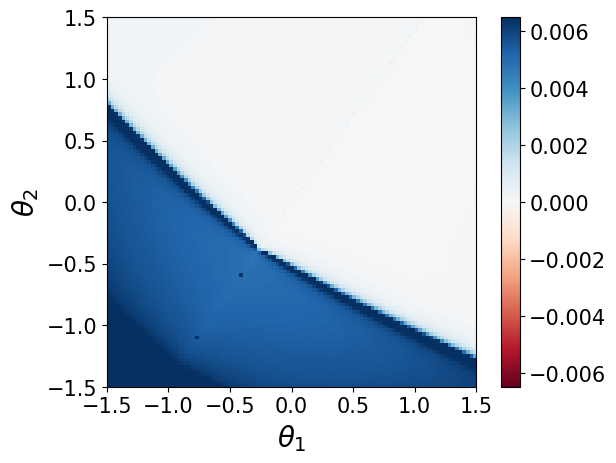}}
    \subfloat[]{\includegraphics[width=\width]{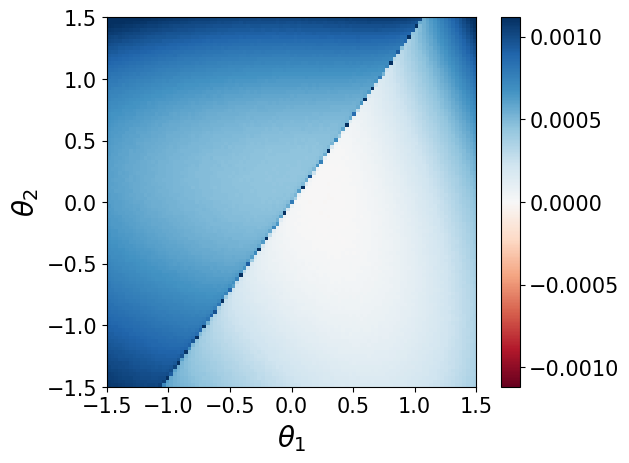}}
    \caption{Loss after training student network for 100 steps. We rescale the $\pi_3\to\pi_3$ map as $\sqrt{n-1}\theta_1I+\theta_2(\mathbf{1}\mathbf{1}^T-I)$.}
    \label{fig:more_normed_phase}
\end{figure}

\subsection{Different activation}
Finally, we also tried replacing the $\texttt{ReLU}$ activation with an $\texttt{erf}$ activation. This was chosen because there is an analytical loss for a teacher-student setup for iid unit Gaussian inputs \citep{saad1995dynamics}. The results are shown in Figure~\ref{fig:more_loss_erf}. We also tend to see additional minima as expected from theory.
\begin{figure}[!h]
    \centering
    \subfloat[\label{fig:more_loss_erf0}]{\includegraphics[width=\width]{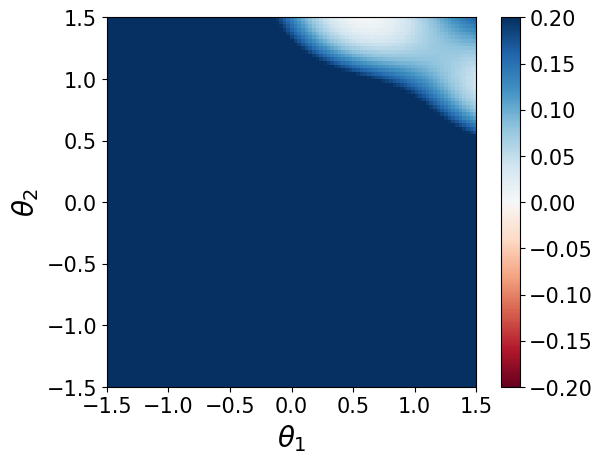}}
    \subfloat[\label{fig:more_loss_erf1}]{\includegraphics[width=\width]{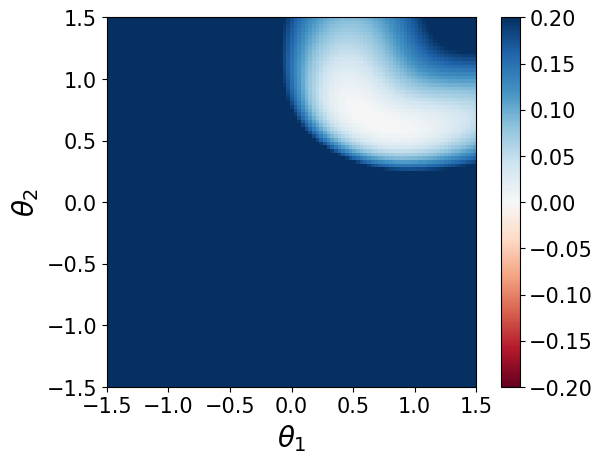}}
    \subfloat[\label{fig:more_loss_erf2}]{\includegraphics[width=\width]{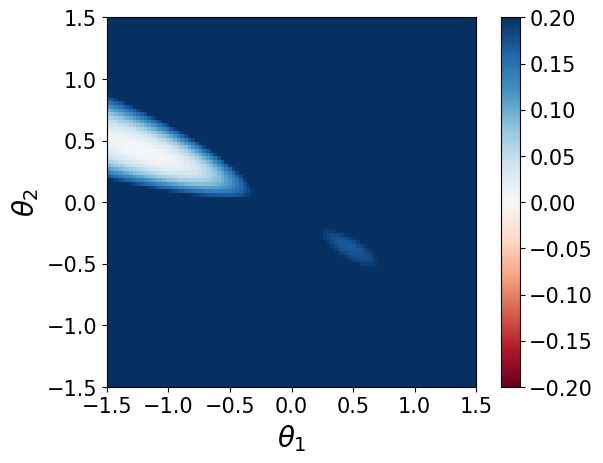}}
    \subfloat[\label{fig:more_loss_erf3}]{\includegraphics[width=\width]{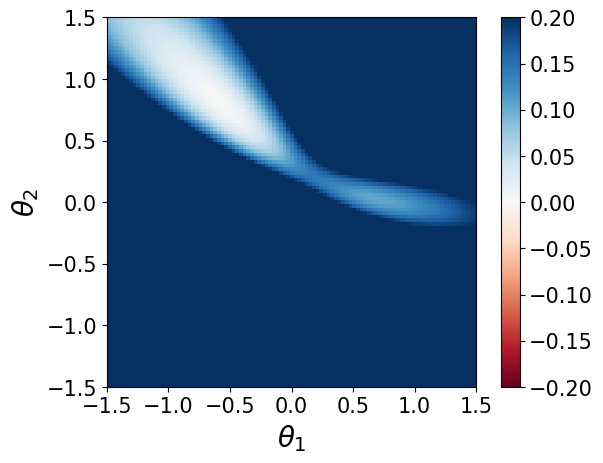}}
    \subfloat[\label{fig:more_loss_erf4}]{\includegraphics[width=\width]{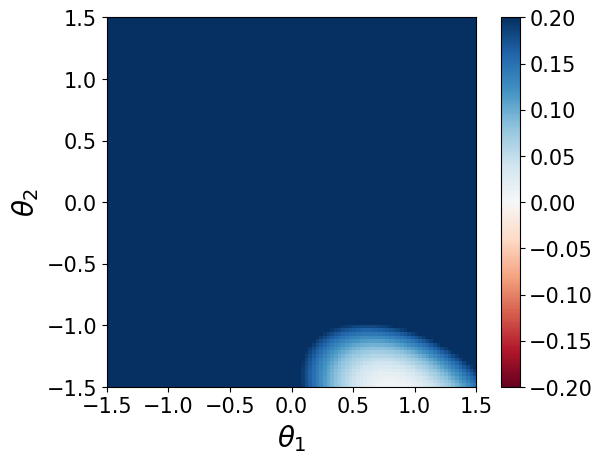}}\\
    \subfloat[\label{fig:more_loss_erf5}]{\includegraphics[width=\width]{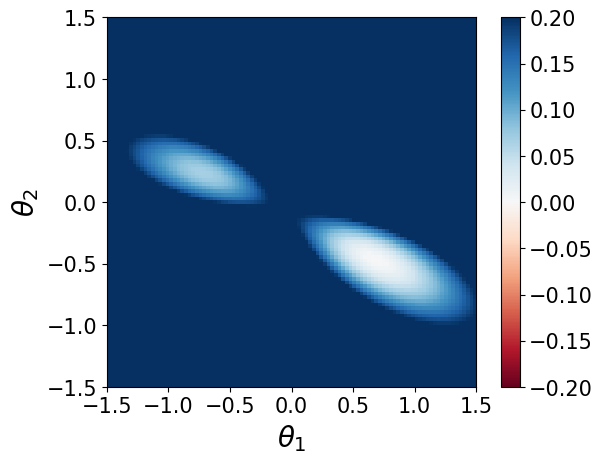}}
    \subfloat[\label{fig:more_loss_erf6}]{\includegraphics[width=\width]{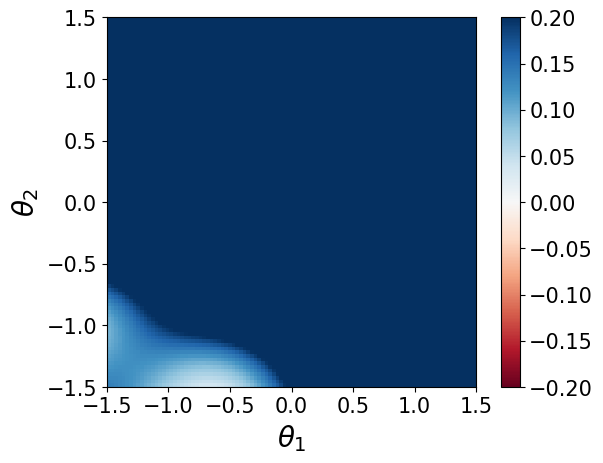}}
    \subfloat[\label{fig:more_loss_erf7}]{\includegraphics[width=\width]{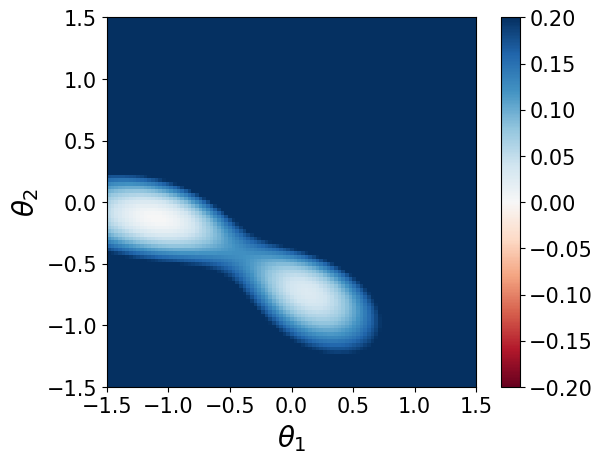}}
    \subfloat[\label{fig:more_loss_erf8}]{\includegraphics[width=\width]{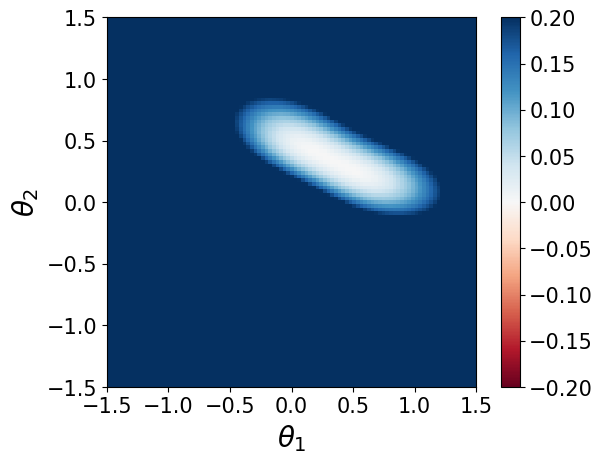}}
    \subfloat[\label{fig:more_loss_erf9}]{\includegraphics[width=\width]{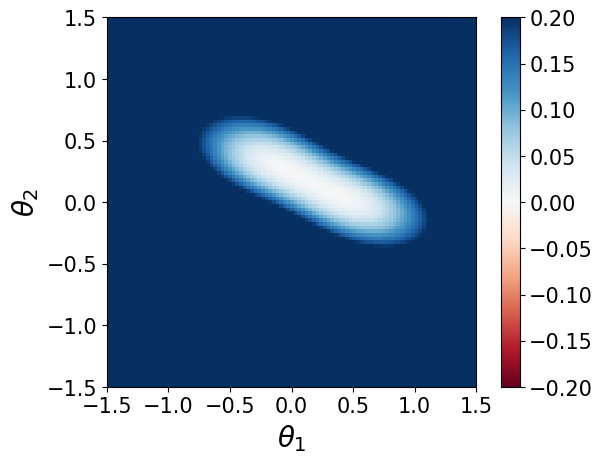}}
    \caption{Loss landscapes for various random seeds. Note there seems to be two distinct minima in almost all cases.}
    \label{fig:more_loss_erf}
\end{figure}

\subsection{$S^2$ signals}
It turns out spherical signals are also permutation representations. This is a result of $\mathrm{SO}(3)/\mathrm{SO}(2)\cong S^2$. Spherical signals are used in $S^2$ activations first proposed in \citep{cohen2018spherical}. In many equivariant architectures, spherical signals are parameterized in the Fourier basis using spherical harmonics \citep{thomas2018tensorfieldnetworksrotation,geiger2022e3nn}. Next, we note that any automorphism of $S^2$ in principle gives a hidden symmetry and the corresponding fixed point subspace is a constant signal. This corresponds to only a scalar $\ell=0$ spherical harmonic. Hence, if our inputs only consists of scalar $\ell=0$ spherical harmonic and a single nonscalar spherical harmonic, Theorem~\ref{thm:hidden_symm_minima} applies and we expect multiple minima.

To test, this, we construct a simple model. First, we pick some cutoff $L_{\mathrm{max}}$ for maximum degree of spherical harmonic used. We then pick some representation for our input by specifying the relevant $\mathrm{SO}(3)$ irreps. We construct an equivariant linear map from our input representation to $0\oplus1\oplus\ldots\oplus L_{\mathrm{max}}$ used to represent the $S^2$ signal. We then apply our nonlinearity on the sphere and then extract the harmonics of the final signal. Finally we dot product final harmonics with itself and sum to obtain our output. We use the $\texttt{e3nn}$ library to build this model \citep{geiger2022e3nn}.

\subsubsection{Single scalar and nonscalar irrep}
We first consider the case where our input rep is of the form $0\oplus L$. This is the case where Theorem~\ref{thm:hidden_symm_minima} applies. For such a network, by Schur's lemma there are only two degrees of freedom in the equivariant linear layer. We map the $0$ and $L$ irreps to the $\ell=0$ and $\ell=L$ harmonics of the spherical signal. Mathematically, we have the signal
\[f(\hat{r})=\theta_0 x^{(0)}+\theta_1 \sum_{m}x^{(L)}_mY^{L}_m(\hat{r})\]
where $x^{(0)}$ is the scalar input and $x^{(L)}_m$ is the $L$ irrep input and $\theta_0,\theta_1$ are the weights.

The hidden fixed point space corresponding to automorphisms of the sphere is precisely constant signals on the sphere. This happens when $\theta_1=0$. Hence by Theorem~\ref{thm:hidden_symm_minima}, we expect there to be two minima corresponding to positive and negative values of $\theta_1$. We randomly choose values of $\theta_0,\theta_1$ for the teacher network then sweep over $\theta_0,\theta_1$ and compute the loss of the student network. The resulting loss landscape is shown in Figure~\ref{fig:more_loss_s2}.
\begin{figure}[!h]
    \centering
    \subfloat[$0\oplus1$\label{fig:more_loss_s21}]{\includegraphics[width=\width]{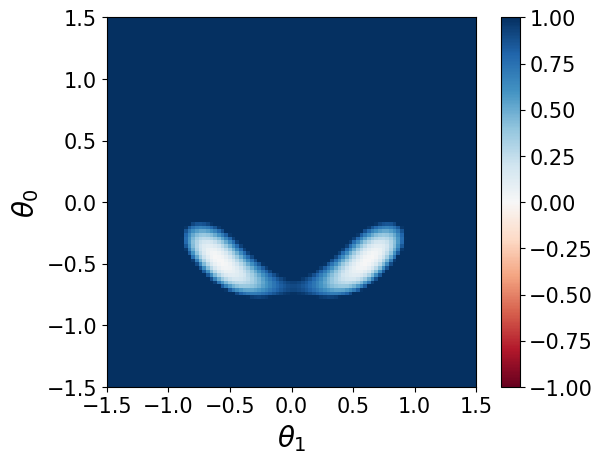}}
    \subfloat[$0\oplus2$\label{fig:more_loss_s22}]{\includegraphics[width=\width]{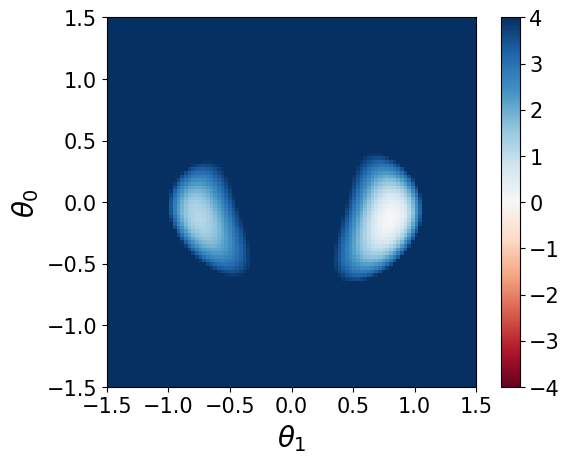}}
    \subfloat[$0\oplus3$\label{fig:more_loss_s23}]{\includegraphics[width=\width]{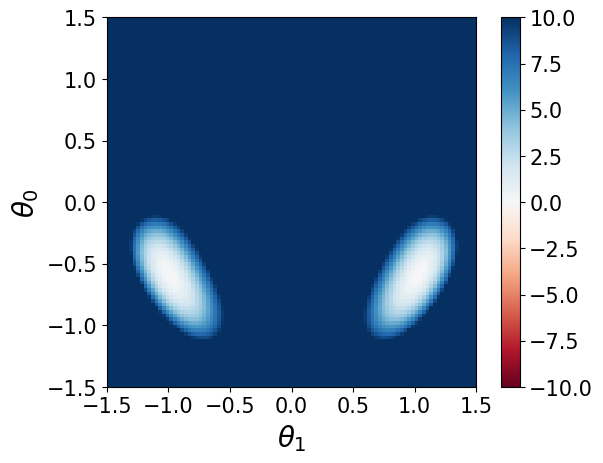}}
    \subfloat[$0\oplus4$\label{fig:more_loss_s24}]{\includegraphics[width=\width]{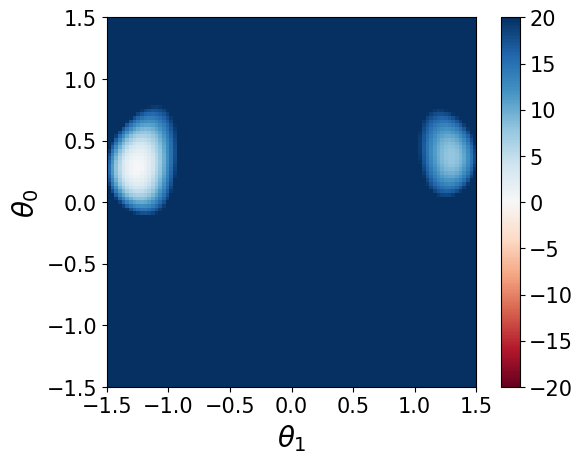}}
    \caption{Loss landscapes for various random seeds and different $L$. Again, note the existence of two minima. However, the landscape appears symmetric for odd $L$ but not even $L$.}
    \label{fig:more_loss_s2}
\end{figure}

Note that the loss landscape appears symmetric for odd values of $L$. It turns out that this is because the inversion map on $S^2$ gives a simple sign flip on odd spherical harmonic values. Hence this is a parameter symmetry of such a model and our previous hidden fixed point space is in fact just a fixed point space for odd $L$. So for odd $L$ we expect the two minima to be the same. However, for even $L$, the $\theta_1=0$ line is truly a hidden fixed point subspace and we expect different minima.

\subsubsection{Other input reps}
We also tried testing other combinations of input reps for which Theorem~\ref{thm:hidden_symm_minima} does not apply. In particular, we consider inputs of type $L_1+L_2$ such that for $x^{(L_1)}_m$ and $x^{(L_2)}$ input we obtain a signal
\[f(\hat{r})=\theta_1 \sum_{m}x^{(L_1)}_mY^{L_1}_m(\hat{r})+\theta_2 \sum_{m}x^{(L_2)}_mY^{L_2}_m(\hat{r})\]
where $\theta_1,\theta_2$ are the weights. We again randomly initialize these weights for the teacher network and sweep over $\theta_1,\theta_2$ for the student network and compute the loss. Interestingly, we still often observe multiple minima reminiscent of the patterns expected from our other experiments.
\begin{figure}[!h]
    \centering
    \subfloat[$2\oplus4$]{\includegraphics[width=0.3\textwidth]{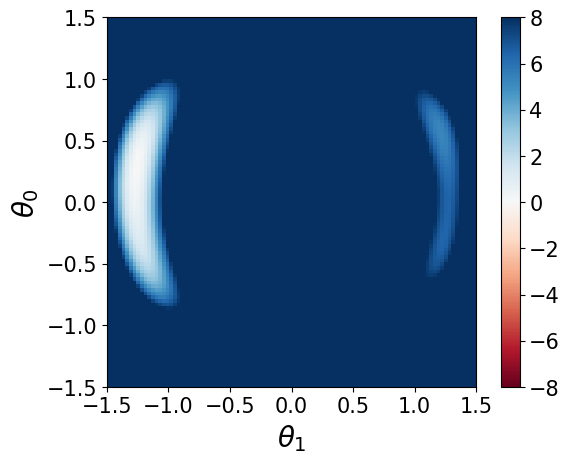}}
    \subfloat[$2\oplus6$]{\includegraphics[width=0.3\textwidth]{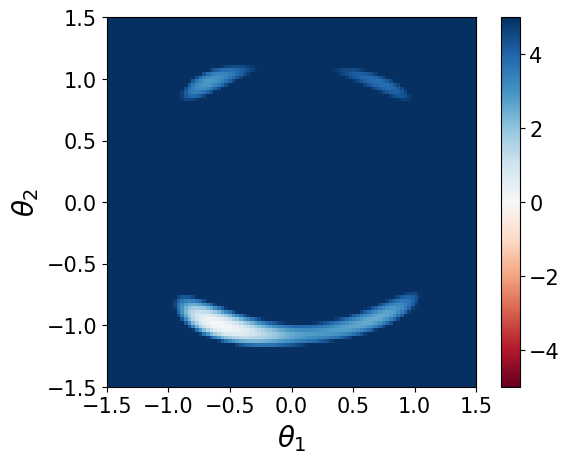}}
    \subfloat[$4\oplus6$]{\includegraphics[width=0.3\textwidth]{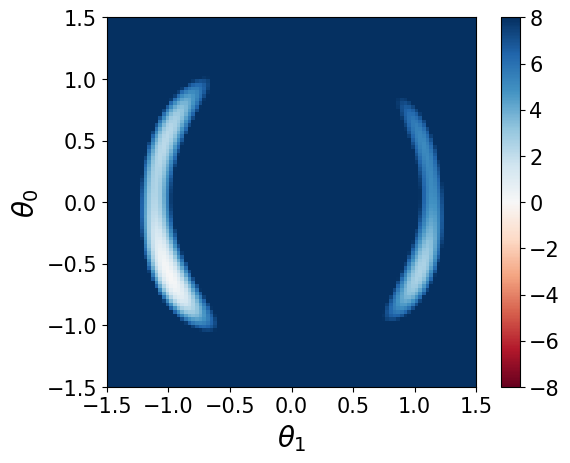}}
    \caption{Loss landscapes for various random seeds and different input types $L_1\oplus L_2$. Even though Theorem~\ref{thm:hidden_symm_minima} does not apply, we still often observe multiple minima.}
    \label{fig:more_loss_s2_other}
\end{figure}
\section{Loss landscape and different optimizers}

We provide a brief discussion of how our loss landscape characterization affects our understanding of optimization. In general, because Theorem~\ref{thm:hidden_symm_minima} implies existence of multiple minima, we expect potential issues where gradient based optimizers converge to the wrong minimum. In addition, we note that for certain "nice" parameterization, gradients in the hidden fixed point space stay in that space. Hence gradient descent and SGD would never cross this space. If we model weight decay with L2-regularization, we see that the regularization term does not affect the symmetry of the loss landscape so this barrier remains. However, for different parameterizations or equivalently, methods assigning different learning rates to different parameters \cite{kristiadi2023geometry}, the hidden fixed point space could pose less of an issue. 

Lastly, Lemma~\ref{lemma:hidden_symm_crit_pts} implies existence of critical points in a much more general setting. While these may be saddles instead of minima, saddles are still considered a problem for many optimization methods \cite{dauphin2014identifying}. Further, a number of works have shown saddle-to-saddle dynamics under various settings \citep{jacot2021saddle,pesme2023saddle,abbe2023sgd}. Hence we believe Lemma~\ref{lemma:hidden_symm_crit_pts} may have important consequences to be explored in future work.

\newpage
\section*{NeurIPS Paper Checklist}

The checklist is designed to encourage best practices for responsible machine learning research, addressing issues of reproducibility, transparency, research ethics, and societal impact. Do not remove the checklist: {\bf The papers not including the checklist will be desk rejected.} The checklist should follow the references and follow the (optional) supplemental material.  The checklist does NOT count towards the page
limit. 

Please read the checklist guidelines carefully for information on how to answer these questions. For each question in the checklist:
\begin{itemize}
    \item You should answer \answerYes{}, \answerNo{}, or \answerNA{}.
    \item \answerNA{} means either that the question is Not Applicable for that particular paper or the relevant information is Not Available.
    \item Please provide a short (1–2 sentence) justification right after your answer (even for NA). 
\end{itemize}

{\bf The checklist answers are an integral part of your paper submission.} They are visible to the reviewers, area chairs, senior area chairs, and ethics reviewers. You will be asked to also include it (after eventual revisions) with the final version of your paper, and its final version will be published with the paper.

The reviewers of your paper will be asked to use the checklist as one of the factors in their evaluation. While "\answerYes{}" is generally preferable to "\answerNo{}", it is perfectly acceptable to answer "\answerNo{}" provided a proper justification is given (e.g., "error bars are not reported because it would be too computationally expensive" or "we were unable to find the license for the dataset we used"). In general, answering "\answerNo{}" or "\answerNA{}" is not grounds for rejection. While the questions are phrased in a binary way, we acknowledge that the true answer is often more nuanced, so please just use your best judgment and write a justification to elaborate. All supporting evidence can appear either in the main paper or the supplemental material, provided in appendix. If you answer \answerYes{} to a question, in the justification please point to the section(s) where related material for the question can be found.

IMPORTANT, please:
\begin{itemize}
    \item {\bf Delete this instruction block, but keep the section heading ``NeurIPS Paper Checklist"},
    \item  {\bf Keep the checklist subsection headings, questions/answers and guidelines below.}
    \item {\bf Do not modify the questions and only use the provided macros for your answers}.
\end{itemize}


\begin{enumerate}

\item {\bf Claims}
    \item[] Question: Do the main claims made in the abstract and introduction accurately reflect the paper's contributions and scope?
    \item[] Answer: \answerYes{} 
    \item[] Justification: We clearly summarize our main theoretical contributions and experimental evidence. We also highlight the key takeaways as bullet points.
    \item[] Guidelines:
    \begin{itemize}
        \item The answer NA means that the abstract and introduction do not include the claims made in the paper.
        \item The abstract and/or introduction should clearly state the claims made, including the contributions made in the paper and important assumptions and limitations. A No or NA answer to this question will not be perceived well by the reviewers. 
        \item The claims made should match theoretical and experimental results, and reflect how much the results can be expected to generalize to other settings. 
        \item It is fine to include aspirational goals as motivation as long as it is clear that these goals are not attained by the paper. 
    \end{itemize}

\item {\bf Limitations}
    \item[] Question: Does the paper discuss the limitations of the work performed by the authors?
    \item[] Answer: \answerYes{} 
    \item[] Justification: We briefly highlight limitations of our paper in the conclusion and how future works could improve upon it.
    \item[] Guidelines:
    \begin{itemize}
        \item The answer NA means that the paper has no limitation while the answer No means that the paper has limitations, but those are not discussed in the paper. 
        \item The authors are encouraged to create a separate "Limitations" section in their paper.
        \item The paper should point out any strong assumptions and how robust the results are to violations of these assumptions (e.g., independence assumptions, noiseless settings, model well-specification, asymptotic approximations only holding locally). The authors should reflect on how these assumptions might be violated in practice and what the implications would be.
        \item The authors should reflect on the scope of the claims made, e.g., if the approach was only tested on a few datasets or with a few runs. In general, empirical results often depend on implicit assumptions, which should be articulated.
        \item The authors should reflect on the factors that influence the performance of the approach. For example, a facial recognition algorithm may perform poorly when image resolution is low or images are taken in low lighting. Or a speech-to-text system might not be used reliably to provide closed captions for online lectures because it fails to handle technical jargon.
        \item The authors should discuss the computational efficiency of the proposed algorithms and how they scale with dataset size.
        \item If applicable, the authors should discuss possible limitations of their approach to address problems of privacy and fairness.
        \item While the authors might fear that complete honesty about limitations might be used by reviewers as grounds for rejection, a worse outcome might be that reviewers discover limitations that aren't acknowledged in the paper. The authors should use their best judgment and recognize that individual actions in favor of transparency play an important role in developing norms that preserve the integrity of the community. Reviewers will be specifically instructed to not penalize honesty concerning limitations.
    \end{itemize}

\item {\bf Theory assumptions and proofs}
    \item[] Question: For each theoretical result, does the paper provide the full set of assumptions and a complete (and correct) proof?
    \item[] Answer: \answerYes{} 
    \item[] Justification: Rigorous statements are provided in the main text and proofs can be found in Appendix~\ref{sec:proofs}.
    \item[] Guidelines:
    \begin{itemize}
        \item The answer NA means that the paper does not include theoretical results. 
        \item All the theorems, formulas, and proofs in the paper should be numbered and cross-referenced.
        \item All assumptions should be clearly stated or referenced in the statement of any theorems.
        \item The proofs can either appear in the main paper or the supplemental material, but if they appear in the supplemental material, the authors are encouraged to provide a short proof sketch to provide intuition. 
        \item Inversely, any informal proof provided in the core of the paper should be complemented by formal proofs provided in appendix or supplemental material.
        \item Theorems and Lemmas that the proof relies upon should be properly referenced. 
    \end{itemize}

    \item {\bf Experimental result reproducibility}
    \item[] Question: Does the paper fully disclose all the information needed to reproduce the main experimental results of the paper to the extent that it affects the main claims and/or conclusions of the paper (regardless of whether the code and data are provided or not)?
    \item[] Answer: \answerYes{}
    \item[] Justification: The architecture and setup of our toy experiments are clearly described.
    \item[] Guidelines:
    \begin{itemize}
        \item The answer NA means that the paper does not include experiments.
        \item If the paper includes experiments, a No answer to this question will not be perceived well by the reviewers: Making the paper reproducible is important, regardless of whether the code and data are provided or not.
        \item If the contribution is a dataset and/or model, the authors should describe the steps taken to make their results reproducible or verifiable. 
        \item Depending on the contribution, reproducibility can be accomplished in various ways. For example, if the contribution is a novel architecture, describing the architecture fully might suffice, or if the contribution is a specific model and empirical evaluation, it may be necessary to either make it possible for others to replicate the model with the same dataset, or provide access to the model. In general. releasing code and data is often one good way to accomplish this, but reproducibility can also be provided via detailed instructions for how to replicate the results, access to a hosted model (e.g., in the case of a large language model), releasing of a model checkpoint, or other means that are appropriate to the research performed.
        \item While NeurIPS does not require releasing code, the conference does require all submissions to provide some reasonable avenue for reproducibility, which may depend on the nature of the contribution. For example
        \begin{enumerate}
            \item If the contribution is primarily a new algorithm, the paper should make it clear how to reproduce that algorithm.
            \item If the contribution is primarily a new model architecture, the paper should describe the architecture clearly and fully.
            \item If the contribution is a new model (e.g., a large language model), then there should either be a way to access this model for reproducing the results or a way to reproduce the model (e.g., with an open-source dataset or instructions for how to construct the dataset).
            \item We recognize that reproducibility may be tricky in some cases, in which case authors are welcome to describe the particular way they provide for reproducibility. In the case of closed-source models, it may be that access to the model is limited in some way (e.g., to registered users), but it should be possible for other researchers to have some path to reproducing or verifying the results.
        \end{enumerate}
    \end{itemize}

\item {\bf Open access to data and code}
    \item[] Question: Does the paper provide open access to the data and code, with sufficient instructions to faithfully reproduce the main experimental results, as described in supplemental material?
    \item[] Answer: \answerYes{}
    \item[] Justification: Our code is provided in the supplemental material.
    \item[] Guidelines:
    \begin{itemize}
        \item The answer NA means that paper does not include experiments requiring code.
        \item Please see the NeurIPS code and data submission guidelines (\url{https://nips.cc/public/guides/CodeSubmissionPolicy}) for more details.
        \item While we encourage the release of code and data, we understand that this might not be possible, so “No” is an acceptable answer. Papers cannot be rejected simply for not including code, unless this is central to the contribution (e.g., for a new open-source benchmark).
        \item The instructions should contain the exact command and environment needed to run to reproduce the results. See the NeurIPS code and data submission guidelines (\url{https://nips.cc/public/guides/CodeSubmissionPolicy}) for more details.
        \item The authors should provide instructions on data access and preparation, including how to access the raw data, preprocessed data, intermediate data, and generated data, etc.
        \item The authors should provide scripts to reproduce all experimental results for the new proposed method and baselines. If only a subset of experiments are reproducible, they should state which ones are omitted from the script and why.
        \item At submission time, to preserve anonymity, the authors should release anonymized versions (if applicable).
        \item Providing as much information as possible in supplemental material (appended to the paper) is recommended, but including URLs to data and code is permitted.
    \end{itemize}

\item {\bf Experimental setting/details}
    \item[] Question: Does the paper specify all the training and test details (e.g., data splits, hyperparameters, how they were chosen, type of optimizer, etc.) necessary to understand the results?
    \item[] Answer: \answerYes{} 
    \item[] Justification: We explain the setup of our toy experiments in the main paper. The full details are in the code.
    \item[] Guidelines:
    \begin{itemize}
        \item The answer NA means that the paper does not include experiments.
        \item The experimental setting should be presented in the core of the paper to a level of detail that is necessary to appreciate the results and make sense of them.
        \item The full details can be provided either with the code, in appendix, or as supplemental material.
    \end{itemize}

\item {\bf Experiment statistical significance}
    \item[] Question: Does the paper report error bars suitably and correctly defined or other appropriate information about the statistical significance of the experiments?
    \item[] Answer: \answerNA{}
    \item[] Justification: We only have toy experiments meant to support our theoretical loss landscape insights. Statistical methods are not applicable.
    \item[] Guidelines:
    \begin{itemize}
        \item The answer NA means that the paper does not include experiments.
        \item The authors should answer "Yes" if the results are accompanied by error bars, confidence intervals, or statistical significance tests, at least for the experiments that support the main claims of the paper.
        \item The factors of variability that the error bars are capturing should be clearly stated (for example, train/test split, initialization, random drawing of some parameter, or overall run with given experimental conditions).
        \item The method for calculating the error bars should be explained (closed form formula, call to a library function, bootstrap, etc.)
        \item The assumptions made should be given (e.g., Normally distributed errors).
        \item It should be clear whether the error bar is the standard deviation or the standard error of the mean.
        \item It is OK to report 1-sigma error bars, but one should state it. The authors should preferably report a 2-sigma error bar than state that they have a 96\% CI, if the hypothesis of Normality of errors is not verified.
        \item For asymmetric distributions, the authors should be careful not to show in tables or figures symmetric error bars that would yield results that are out of range (e.g. negative error rates).
        \item If error bars are reported in tables or plots, The authors should explain in the text how they were calculated and reference the corresponding figures or tables in the text.
    \end{itemize}

\item {\bf Experiments compute resources}
    \item[] Question: For each experiment, does the paper provide sufficient information on the computer resources (type of compute workers, memory, time of execution) needed to reproduce the experiments?
    \item[] Answer: \answerYes{} 
    \item[] Justification: All of our experiments are simple and can be run on a laptop.
    \item[] Guidelines:
    \begin{itemize}
        \item The answer NA means that the paper does not include experiments.
        \item The paper should indicate the type of compute workers CPU or GPU, internal cluster, or cloud provider, including relevant memory and storage.
        \item The paper should provide the amount of compute required for each of the individual experimental runs as well as estimate the total compute. 
        \item The paper should disclose whether the full research project required more compute than the experiments reported in the paper (e.g., preliminary or failed experiments that didn't make it into the paper). 
    \end{itemize}
    
\item {\bf Code of ethics}
    \item[] Question: Does the research conducted in the paper conform, in every respect, with the NeurIPS Code of Ethics \url{https://neurips.cc/public/EthicsGuidelines}?
    \item[] Answer: \answerYes{} 
    \item[] Justification: We confirm that we have reviewed the code of ethics and that our work conforms.
    \item[] Guidelines:
    \begin{itemize}
        \item The answer NA means that the authors have not reviewed the NeurIPS Code of Ethics.
        \item If the authors answer No, they should explain the special circumstances that require a deviation from the Code of Ethics.
        \item The authors should make sure to preserve anonymity (e.g., if there is a special consideration due to laws or regulations in their jurisdiction).
    \end{itemize}

\item {\bf Broader impacts}
    \item[] Question: Does the paper discuss both potential positive societal impacts and negative societal impacts of the work performed?
    \item[] Answer: \answerNA{} 
    \item[] Justification: Our work is foundational and seeks to further our understanding of neural networks.
    \item[] Guidelines:
    \begin{itemize}
        \item The answer NA means that there is no societal impact of the work performed.
        \item If the authors answer NA or No, they should explain why their work has no societal impact or why the paper does not address societal impact.
        \item Examples of negative societal impacts include potential malicious or unintended uses (e.g., disinformation, generating fake profiles, surveillance), fairness considerations (e.g., deployment of technologies that could make decisions that unfairly impact specific groups), privacy considerations, and security considerations.
        \item The conference expects that many papers will be foundational research and not tied to particular applications, let alone deployments. However, if there is a direct path to any negative applications, the authors should point it out. For example, it is legitimate to point out that an improvement in the quality of generative models could be used to generate deepfakes for disinformation. On the other hand, it is not needed to point out that a generic algorithm for optimizing neural networks could enable people to train models that generate Deepfakes faster.
        \item The authors should consider possible harms that could arise when the technology is being used as intended and functioning correctly, harms that could arise when the technology is being used as intended but gives incorrect results, and harms following from (intentional or unintentional) misuse of the technology.
        \item If there are negative societal impacts, the authors could also discuss possible mitigation strategies (e.g., gated release of models, providing defenses in addition to attacks, mechanisms for monitoring misuse, mechanisms to monitor how a system learns from feedback over time, improving the efficiency and accessibility of ML).
    \end{itemize}
    
\item {\bf Safeguards}
    \item[] Question: Does the paper describe safeguards that have been put in place for responsible release of data or models that have a high risk for misuse (e.g., pretrained language models, image generators, or scraped datasets)?
    \item[] Answer: \answerNA{} 
    \item[] Justification: Our work is mostly theoretical.
    \item[] Guidelines:
    \begin{itemize}
        \item The answer NA means that the paper poses no such risks.
        \item Released models that have a high risk for misuse or dual-use should be released with necessary safeguards to allow for controlled use of the model, for example by requiring that users adhere to usage guidelines or restrictions to access the model or implementing safety filters. 
        \item Datasets that have been scraped from the Internet could pose safety risks. The authors should describe how they avoided releasing unsafe images.
        \item We recognize that providing effective safeguards is challenging, and many papers do not require this, but we encourage authors to take this into account and make a best faith effort.
    \end{itemize}

\item {\bf Licenses for existing assets}
    \item[] Question: Are the creators or original owners of assets (e.g., code, data, models), used in the paper, properly credited and are the license and terms of use explicitly mentioned and properly respected?
    \item[] Answer: \answerYes{} 
    \item[] Justification: We have cited the packages used in our toy experiments.
    \item[] Guidelines:
    \begin{itemize}
        \item The answer NA means that the paper does not use existing assets.
        \item The authors should cite the original paper that produced the code package or dataset.
        \item The authors should state which version of the asset is used and, if possible, include a URL.
        \item The name of the license (e.g., CC-BY 4.0) should be included for each asset.
        \item For scraped data from a particular source (e.g., website), the copyright and terms of service of that source should be provided.
        \item If assets are released, the license, copyright information, and terms of use in the package should be provided. For popular datasets, \url{paperswithcode.com/datasets} has curated licenses for some datasets. Their licensing guide can help determine the license of a dataset.
        \item For existing datasets that are re-packaged, both the original license and the license of the derived asset (if it has changed) should be provided.
        \item If this information is not available online, the authors are encouraged to reach out to the asset's creators.
    \end{itemize}

\item {\bf New assets}
    \item[] Question: Are new assets introduced in the paper well documented and is the documentation provided alongside the assets?
    \item[] Answer: \answerNA{} 
    \item[] Justification: No assets are released, our work is theoretical.
    \item[] Guidelines:
    \begin{itemize}
        \item The answer NA means that the paper does not release new assets.
        \item Researchers should communicate the details of the dataset/code/model as part of their submissions via structured templates. This includes details about training, license, limitations, etc. 
        \item The paper should discuss whether and how consent was obtained from people whose asset is used.
        \item At submission time, remember to anonymize your assets (if applicable). You can either create an anonymized URL or include an anonymized zip file.
    \end{itemize}

\item {\bf Crowdsourcing and research with human subjects}
    \item[] Question: For crowdsourcing experiments and research with human subjects, does the paper include the full text of instructions given to participants and screenshots, if applicable, as well as details about compensation (if any)? 
    \item[] Answer: \answerNA{} 
    \item[] Justification: No crowdsourcing or human subjects were used.
    \item[] Guidelines:
    \begin{itemize}
        \item The answer NA means that the paper does not involve crowdsourcing nor research with human subjects.
        \item Including this information in the supplemental material is fine, but if the main contribution of the paper involves human subjects, then as much detail as possible should be included in the main paper. 
        \item According to the NeurIPS Code of Ethics, workers involved in data collection, curation, or other labor should be paid at least the minimum wage in the country of the data collector. 
    \end{itemize}

\item {\bf Institutional review board (IRB) approvals or equivalent for research with human subjects}
    \item[] Question: Does the paper describe potential risks incurred by study participants, whether such risks were disclosed to the subjects, and whether Institutional Review Board (IRB) approvals (or an equivalent approval/review based on the requirements of your country or institution) were obtained?
    \item[] Answer: \answerNA{} 
    \item[] Justification: No crowdsourcing or human subjects were used.
    \item[] Guidelines:
    \begin{itemize}
        \item The answer NA means that the paper does not involve crowdsourcing nor research with human subjects.
        \item Depending on the country in which research is conducted, IRB approval (or equivalent) may be required for any human subjects research. If you obtained IRB approval, you should clearly state this in the paper. 
        \item We recognize that the procedures for this may vary significantly between institutions and locations, and we expect authors to adhere to the NeurIPS Code of Ethics and the guidelines for their institution. 
        \item For initial submissions, do not include any information that would break anonymity (if applicable), such as the institution conducting the review.
    \end{itemize}

\item {\bf Declaration of LLM usage}
    \item[] Question: Does the paper describe the usage of LLMs if it is an important, original, or non-standard component of the core methods in this research? Note that if the LLM is used only for writing, editing, or formatting purposes and does not impact the core methodology, scientific rigorousness, or originality of the research, declaration is not required.
    \item[] Answer: \answerNA{} 
    \item[] Justification: LLMs were only used to help edit the writing.
    \item[] Guidelines:
    \begin{itemize}
        \item The answer NA means that the core method development in this research does not involve LLMs as any important, original, or non-standard components.
        \item Please refer to our LLM policy (\url{https://neurips.cc/Conferences/2025/LLM}) for what should or should not be described.
    \end{itemize}

\end{enumerate}

\end{document}